\newcommand{\beas}{\begin{eqnarray*}}
\newcommand{\eeas}{\end{eqnarray*}}
\newtheorem{thm}{Theorem}
\newtheorem{cor}{Corollary}
\newtheorem{lemma}{Lemma}
\newtheorem{defn}{Definition}
\theoremstyle{plain}
\newcommand{\cP}{\Pi}
\newcommand{\cI}{\mathcal{I}}
\newcommand{\cL}{\mathcal{L}}
\newcommand{\cT}{\mathcal{T}}
\newcommand{\cD}{\mathcal{D}}
\newcommand{\B}{\mathbf{B}}
\newcommand{\expect}{\mathbb{E}}
\newcommand{\prob}{\mathrm{Pr}}
\newcommand{\vd}{\mathbf{d}}
\newcommand{\vy}{\mathbf{y}}
\newcommand{\Lt}{\widetilde{L}}
\newcommand{\Ht}{\widetilde{H}}
\begin{document}

\title{\textbf{Query Learning with Exponential Query Costs}}

\date{}
\author{\small Gowtham Bellala$^{1}$, Suresh K. Bhavnani$^{2}$, Clayton D. Scott$^{1,2}$ \\ \small $^{1}$Department of Electrical Engineering and Computer Science,\\ \small $^2$Center for Computational Medicine and Bioinformatics, University of Michigan, Ann Arbor, MI 48109 \\ \small E-mail: \{ gowtham, bhavnani, clayscot \}@umich.edu}

\maketitle

\begin{abstract}
 In query learning, the goal is to identify an unknown object while minimizing the number of ``yes'' or ``no'' questions (queries) posed about that object. A well-studied algorithm for query learning is known as generalized binary search (GBS). We show that GBS is a greedy algorithm to optimize the expected number of queries needed to identify the unknown object. We also generalize GBS in two ways. First, we consider the case where the cost of querying grows exponentially in the number of queries and the goal is to minimize the expected exponential cost. Then, we consider the case where the objects are partitioned into groups, and the objective is to identify only the group to which the object belongs. We derive algorithms to address these issues in a common, information-theoretic framework. In particular, we present an exact formula for the objective function in each case involving Shannon or R\'{e}nyi entropy, and develop a greedy algorithm for minimizing it. Our algorithms are demonstrated on two applications of query learning, active learning and emergency response.
\end{abstract}

% ------------------------------------------------------------------------------------------------------
% ----------------------------- Introduction -----------------------------

\section{Introduction}

In query learning there is an unknown object $\theta$ belonging to a set $\Theta = \{\theta_1,\cdots,\theta_M\}$ of $M$ different objects and a set $Q = \{q_1,\cdots,q_N\}$ of $N$ distinct subsets of $\Theta$ known as queries. Additionally, the vector $\cP = (\pi_1,\cdots,\pi_M)$ denotes the \emph{a priori} probability distribution over $\Theta$. The goal is to determine the unknown object $\theta \in \Theta$ through as few queries from $Q$ as possible, where a query $q \in Q$ returns a value $1$ if $\theta \in q$, and $0$ otherwise. A query learning algorithm thus corresponds to a binary decision tree, where the internal nodes are queries, and the leaf nodes are objects. The above problem is motivated by several real-world applications including fault testing \cite{koren,unluyurt}, machine diagnostics \cite{shiozaki}, disease diagnosis \cite{loveland,pattipati}, computer vision \cite{geman} and active learning \cite{dasgupta,nowak}. Algorithms and performance guarantees have been extensively developed in the literature, as described in Section \ref{sec:related work} below. We also note that the above problem is known more specifically as query learning with membership queries. See \cite{angluin} for an overview of query learning in general.

As a motivating example, consider the problem of toxic chemical identification, where a first responder may question victims of chemical exposure regarding the symptoms they experience. Chemicals that are inconsistent with the reported symptoms may then be eliminated. Given the importance of this problem, several organizations have developed extensive evidence-based databases (e.g., Wireless Information System for Emergency Responders (WISER) \cite{szczur}) that record toxic chemicals and the acute symptoms which they are known to cause. Unfortunately, many symptoms tend to be nonspecific (e.g., nausea can be caused by many different chemicals), and it is therefore critical for the first responder to pose these questions in a sequence that leads to chemical identification in as few questions as possible.

A well studied algorithm for query learning is known as the \emph{splitting algorithm} \cite{loveland} or \emph{generalized binary search} (GBS) \cite{dasgupta,nowak}. This is a greedy algorithm which selects a query that most evenly divides the probability mass of the remaining objects \cite{loveland,goodman,dasgupta,nowak}. In this paper, we consider two important limitations of GBS and propose natural extensions inspired from an information theoretic perspective.

First, we note that GBS is tailored to minimize the average number of queries needed to identify $\theta$, thereby implicitly assuming that the incremental cost for each additional query is constant. However, in certain applications, the cost of additional queries grows. For example, in time critical applications such as toxic chemical identification, each additional symptom queried impacts a first responder's ability to save lives. If some chemicals are less prevalent, GBS may require an unacceptably large number of queries to identify them. This problem is compounded when the prior probabilities $\pi_i$ are inaccurately specified. To address these issues, we consider an objective function where the cost of querying grows exponentially in the number of queries. This objective function has been used earlier in the context of source coding for the design of prefix-free codes (discussed in Section \ref{sec:related work}). We propose an extension of GBS that greedily optimizes this exponential cost function. The proposed algorithm is also intrinsically more robust to misspecification of the prior probabilities.

Second, we consider the case where the object set $\Theta$ is partitioned into groups of objects and it is only necessary to identify the group to which the object belongs. This problem is once again motivated by toxic chemical identification where the appropriate response to a toxic chemical may only depend on the class of chemicals to which it belongs (pesticide, corrosive acid, etc.). As we explain below, a query learning algorithm such as GBS that is designed to rapidly identify individual objects is not necessarily efficient for group identification. Thus, we propose a natural extension of GBS for rapid group identification. Once again, we consider an objective where the cost of querying grows exponentially in the number of queries.

% ---------------------------------- Related work -------------------------------------

\subsection{Background and related work}
\label{sec:related work}

The goal of a standard query learning problem is to construct an optimal binary decision tree, where each internal node in the tree is associated with a query from the set $Q$, and each leaf node corresponds to an object from $\Theta$. Optimality is often with respect to the expected number of queries needed to identify $\theta$, that is, the expected depth of the leaf node corresponding to the unknown object $\theta$. In the special case when the query set $Q$ is \emph{complete}\footnote{A query set $Q$ is said to be \emph{complete} if for any $S \subseteq \Theta$ there exists a query $q \in Q$ such that either $q = S$ or $\Theta \setminus q = S$}, the problem of constructing an optimal binary decision tree is equivalent to constructing optimal variable-length binary prefix-free codes with minimum expected length. This problem has been widely studied in information theory with both Shannon \cite{shannon} and Fano \cite{fano} independently proposing a top-down greedy strategy to construct suboptimal binary prefix codes, popularly known as Shannon-Fano codes. Huffman \cite{huffman} derived a simple bottom-up algorithm to construct optimal binary prefix codes. A well known lower bound on the expected length of the optimal binary prefix codes is given by the Shannon entropy of $\cP$ \cite{cover}.

The problem of query learning when the query set $Q$ is not \emph{complete} has also been studied extensively in the literature with Garey \cite{garey1,garey2} proposing an optimal dynamic programming based algorithm. This algorithm runs in exponential time in the worst case. Later, Hyafil and Rivest \cite{rivest} showed that determining an optimal binary decision tree for this problem is NP-complete. Thereafter, various greedy algorithms \cite{loveland,kosaraju,roy} have been proposed to obtain a suboptimal binary decision tree. A widely studied solution is the \emph{splitting algorithm} \cite{loveland} or \emph{generalized binary search} (GBS) \cite{dasgupta,nowak}. Various bounds on the performance of this greedy algorithm have been established in \cite{loveland,dasgupta,nowak}. We show below in Corollary \ref{cor:OIsc1} that GBS greedily minimizes the average number of queries, and thus weights each additional query by a constant.

Here, we consider an alternate objective function where the cost grows exponentially in the number of queries. Specifically, the objective function is given by $L_{\lambda}(\cP,\vd) := \log_{\lambda} \left ( \sum_{i=1}^M \pi_i \lambda^{d_i} \right )$, where $\lambda \geq 1$ and $\vd = (d_1,\cdots,d_M)$, $d_i$ corresponding to the number of queries required to identify object $\theta_i$ using a given tree.  This cost function was proposed by Campbell \cite{campbell1} in the context of source coding for the design of binary prefix-free codes. It has also been used recently for the design of alphabetic codes \cite{baer1} and random search trees \cite{schulz}. 

Campbell \cite{campbell2} defines a generalized entropy function in terms of a coding problem and shows that the $\alpha$-R\'{e}nyi entropy, given by $H_{\alpha}(\cP) = \frac{1}{1-\alpha} \log_2 \left ( \sum_{i=1}^M \pi_i^{\alpha} \right )$, can be characterized as 
\begin{align}
\label{eq:renyi entropy definition}
H_{\alpha} (\cP) & = \underset{\vd \in S}{\operatorname{\inf}} L_{\lambda}(\cP, \vd)
\end{align}
where $\alpha = \frac{1}{1 + \log_2\lambda}$ and $S$ is the set of all real distributions of $\vd$ for which the Kraft's inequality $\sum_{i=1}^M 2^{-d_i} \leq 1$, is satisfied. For clarity, here we show the dependence of $L_{\lambda}(\cP)$ on $\vd$, although later this dependence will not be made explicit. Note that the numbers $d_i$ are not restricted to integer values in (\ref{eq:renyi entropy definition}), hence the R\'{e}nyi entropy merely provides a lower bound on the exponential cost function of any binary decision tree. In the special case when the query set $Q$ is \emph{complete}, it has been shown that an optimal binary decision tree (i.e., optimal binary prefix-free codes) that minimizes $L_{\lambda}(\cP)$ can be obtained by a modified version of the Huffman algorithm \cite{hu,parker,humblet,schulz}. However, when the query set $Q$ is not \emph{complete}, there does not exist an algorithm to the best of our knowledge that constructs a good suboptimal decision tree.

%Also, this objective function satisfies an additivity condition, i.e.,
%\begin{align*}
%L_{\lambda}(\cP \widetilde{\cP}, \vd + \widetilde{\vd}) = L_{\lambda}(\cP,\vd) + L_{\lambda}(\widetilde{\cP} + \widetilde{\vd})
%\end{align*} 

% ----------------------------------- Notation -------------------------------

\subsection{Notation}
\label{sec:notation}
We denote a query learning problem by a pair $(\B,\cP)$ where $\B$ is a known $M \times N$ binary matrix with $b_{ij}$ equal to $1$ if $\theta_i \in q_j$, and $0$ otherwise. A decision tree $T$ constructed on $(\B,\cP)$ has a query from the set $Q$ at each of its internal nodes, with the leaf nodes terminating in the objects from $\Theta$. At each internal node in the tree, the objects that have reached that node are divided into two subsets, depending on whether they respond $0$ or $1$ to the query, respectively. For a decision tree with $L$ leaves, the leaf nodes are indexed by the set $\cL = \{1,\cdots,L\}$ and the internal nodes are indexed by the set $\cI = \{L+1,\cdots,2L-1\}$. At any internal node $a \in \cI$, let $l(a), r(a)$ denote the ``left'' and ``right'' child nodes, and let $\Theta_a \subseteq \Theta$ denote the set of objects that reach node `$a$'. Thus, the sets $\Theta_{l(a)} \subseteq \Theta_a, \Theta_{r(a)} \subseteq \Theta_a$ correspond to the objects in $\Theta_a$ that respond $0$ and $1$ to the query at node `$a$', respectively. We denote by $\pi_{\Theta_a} := \sum_{\{i: \theta_i \in \Theta_a \}}\pi_i$, the probability mass of the objects reaching node `$a$' in the tree. Also, at any node `$a$', the set $Q_a \subseteq Q$ corresponds to the set of queries that have been performed along the path from the root node up to node `$a$'.  

We denote the $\alpha$-R\'{e}nyi entropy of a vector $\cP = (\pi_1,\cdots,\pi_M)$ by $H_{\alpha}(\Pi) := \frac{1}{1-\alpha}\log_2 \left ( \sum_{i=1}^M \pi_i^{\alpha} \right )$ and its Shannon entropy by $H(\cP) := - \sum_i \pi_i \log_2 \pi_i$, where we use the limit, $\displaystyle \lim_{\pi \rightarrow 0} \pi \log_2 \pi = 0$ to define the limiting cases as $\pi_i \to 0$ for any $i$. Using L'H\^{o}pital's rule, it can be seen that $\underset{\alpha \to 1}{\operatorname{\lim}} H_{\alpha}(\cP) = H(\cP)$. Also, we denote the Shannon entropy of a proportion $\pi \in [0,1]$ by $H(\pi) := -\pi \log_2 \pi - (1-\pi) \log_2 (1-\pi)$. 

% -------------------------------------------------------------------------------------------------------
% ------------------------------ Object Identification -------------------------

\section{Object Identification}
\label{sec:OI}

We begin with the basic query learning problem where the goal is to identify the unknown object $\theta \in \Theta$ in as few queries from $Q$ as possible. We propose a family of greedy algorithms to minimize the exponential cost function $L_{\lambda}(\cP)$ where $\lambda \geq 1$. These algorithms are based on Theorem \ref{thm:OI}, which provides an explicit formula for the gap in Campbell's lower bound. We also note that $L_{\lambda}(\cP)$ reduces to the average depth and the worst case depth in the limiting cases when $\lambda$ tends to one and infinity, respectively. In particular,
\begin{align*}
 L_1(\cP) & := \underset{\lambda \rightarrow 1}{\operatorname{\lim}} L_{\lambda}(\Pi) = \sum_{i=1}^M \pi_id_i \\
L_{\infty}(\cP) &:= \underset{\lambda \rightarrow \infty}{\operatorname{\lim}} L_{\lambda}(\Pi) = \underset{i \in \{1,\cdots,M \}}{\operatorname{\max}} d_i 
\end{align*} 
where $d_i$ denotes the number of queries required to identify object $\theta_i$ in a given tree. In these limiting cases, the entropy lower bound on the cost function reduces to the Shannon entropy $H(\cP)$ and $\log_2 M$, respectively. 

Given a query learning problem $(\B,\cP)$, let $\cT(\B,\cP)$ denote the set of decision trees that can uniquely identify all the objects in the set $\Theta$. 
\begin{thm}
\label{thm:OI}
For any $\lambda \geq 1$, the average exponential depth of the leaf nodes $L_{\lambda}(\cP)$ in a tree $T \in \cT(\B,\cP)$ is given by
\begin{align}
\label{eq:OI}
L_{\lambda}(\cP) = \log_{\lambda} \left ( \lambda^{H_{\alpha}(\Pi)} + \sum_{a \in \cI} \pi_{\Theta_{a}} \left [ (\lambda - 1)\lambda^{d_a} - \cD_{\alpha}(\Theta_a) + \frac{\pi_{\Theta_{l(a)}}}{\pi_{\Theta_a}} \cD_{\alpha}(\Theta_{l(a)}) + \frac{\pi_{\Theta_{r(a)}}}{\pi_{\Theta_a}} \cD_{\alpha}(\Theta_{r(a)})  \right ] \right )
\end{align}
where $d_a$ denotes the depth of any internal node `$a$' in the tree, $\Theta_a$ denotes the set of objects that reach node `$a$', $\pi_{\Theta_a} = \underset{\{i:\theta_i \in \Theta_a\}}{\operatorname{\sum}} \pi_i$, $\alpha = \frac{1}{1 + \log_2 \lambda}$ and $\cD_{\alpha}(\Theta_a) := \left [ \sum_{\{i:\theta_i \in \Theta_a\}} \left ( \frac{\pi_i}{\pi_{\Theta_a}} \right )^{\alpha} \right ]^{1/\alpha}$.
\end{thm}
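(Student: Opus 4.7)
The plan is to show that the expression inside $\log_\lambda$ on the right-hand side of \eqref{eq:OI} equals $\sum_{i=1}^M \pi_i \lambda^{d_i}$, matching the definition $L_\lambda(\Pi) = \log_\lambda\bigl(\sum_i \pi_i \lambda^{d_i}\bigr)$. The key first step is the algebraic identity $\lambda^{H_\alpha(\Pi)} = \bigl(\sum_i \pi_i^\alpha\bigr)^{1/\alpha}$, which follows from $\alpha = 1/(1+\log_2\lambda)$ and motivates introducing the auxiliary quantity $\Phi(\Theta_a) := \pi_{\Theta_a}\cD_{\alpha}(\Theta_a) = \bigl(\sum_{\theta_i \in \Theta_a}\pi_i^\alpha\bigr)^{1/\alpha}$. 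Under this substitution, $\lambda^{H_\alpha(\Pi)} = \Phi(\Theta)$ and each bracketed summand becomes
\[
(\lambda-1)\pi_{\Theta_a}\lambda^{d_a} \;-\; \Phi(\Theta_a) + \Phi(\Theta_{l(a)}) + \Phi(\Theta_{r(a)}),
\]
so the argument of the log splits naturally into two pieces that I will evaluate separately.

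For $\sum_{a \in \cI}\bigl[-\Phi(\Theta_a) + \Phi(\Theta_{l(a)}) + \Phi(\Theta_{r(a)})\bigr]$, I use a telescoping argument on the tree. Every non-root node is a child of exactly one internal node, so the children terms collectively contribute $\Phi$ summed over all non-root nodes (internal and leaf alike), while the $-\Phi(\Theta_a)$ terms remove every internal node contribution. What remains is $\sum_{a \in \cL}\Phi(\Theta_a) - \Phi(\Theta)$. Since each leaf is a singleton $\{\theta_i\}$ with $\Phi(\{\theta_i\}) = \pi_i$, the leaf sum equals $1$, and the root term $-\Phi(\Theta)$ cancels the standalone $\lambda^{H_\alpha(\Pi)} = \Phi(\Theta)$ out front. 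The telescoping piece therefore contributes exactly $1$ to the argument of the log.

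For the remaining piece I swap the order of summation:
\[
(\lambda-1)\sum_{a \in \cI}\pi_{\Theta_a}\lambda^{d_a} \;=\; (\lambda-1)\sum_{i=1}^M \pi_i \sum_{a \in \cI:\, \theta_i \in \Theta_a}\lambda^{d_a}.
\]
The internal nodes $a$ with $\theta_i \in \Theta_a$ are exactly the ancestors of the leaf containing $\theta_i$, at depths $0,1,\ldots,d_i-1$, so the inner sum is the geometric series $(\lambda^{d_i}-1)/(\lambda-1)$. Multiplying by $(\lambda-1)$ gives $\sum_i \pi_i \lambda^{d_i} - 1$, and adding the $1$ from the first piece yields $\sum_i \pi_i \lambda^{d_i}$, as required.

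The real work is not in any single step but in spotting the substitution $\Phi(\Theta_a) = \pi_{\Theta_a}\cD_\alpha(\Theta_a)$, which converts the $\alpha$-th-root quantities into an additive object that telescopes cleanly across the tree and cleanly separates the two kinds of contributions. The one care point is the boundary case $\lambda \to 1$, where $(\lambda^{d_i}-1)/(\lambda-1) \to d_i$ and $H_\alpha \to H$; the identity should be confirmed in that limit by continuity (or L'H\^{o}pital) so that the formula is valid over the full range $\lambda \geq 1$.
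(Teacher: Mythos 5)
Your proof is correct. It rests on the same decomposition as the paper's: writing $\Phi(\Theta_a) = \pi_{\Theta_a}\cD_\alpha(\Theta_a) = \bigl(\sum_{\theta_i\in\Theta_a}\pi_i^\alpha\bigr)^{1/\alpha}$, noting $\lambda^{H_\alpha(\Pi)} = \Phi(\Theta)$, and showing that the two pieces of the bracketed sum contribute $1-\lambda^{H_\alpha(\Pi)}$ and $\sum_i\pi_i\lambda^{d_i}-1$ respectively. These are precisely the identities the paper isolates as Lemmas \ref{lem:Llambda decomposition} and \ref{lem:Halpha decomposition} (in the more general group setting, of which Theorem \ref{thm:OI} is the singleton-group special case). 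Where you differ is in how you establish them: the paper proves both by structural induction on subtrees (a three-case induction for the depth identity, another induction for the telescope), whereas you prove the telescoping identity by the global observation that every non-root node is the child of exactly one internal node, and the depth identity by swapping the order of summation and summing a geometric series over the $d_i$ ancestors of each leaf. Your counting arguments are shorter and, to my eye, more transparent; the paper's inductions buy nothing extra here, since your arguments also generalize verbatim to the group-identification setting (at a leaf all objects share a group, so $\cD_\alpha(\Theta_j)=1$ and $\Phi(\Theta_j)=\pi_{\Theta_j}$, and the leaf sum is still $1$). Your closing remark about $\lambda\to 1$ is apt and matches how the paper itself handles that boundary, via Corollary \ref{cor:OIsc1}.
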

\begin{proof}
Special case of Theorem \ref{thm:GI} below.
\end{proof}

Theorem \ref{thm:OI} provides an explicit formula for the gap in the Campbell's lower bound, namely, the term in summation over internal nodes $\cI$ in (\ref{eq:OI}). Using this theorem, the problem of finding a decision tree with minimum $L_{\lambda}(\Pi)$ can be formulated as the following optimization problem:
\begin{align*}
\underset{T \in \cT(\B,\cP)}{\operatorname{\min}} \log_{\lambda} \left ( \lambda^{H_{\alpha}(\Pi)} + \sum_{a \in \cI} \pi_{\Theta_{a}} \left [ (\lambda - 1)\lambda^{d_a} - \cD_{\alpha}(\Theta_a) + \frac{\pi_{\Theta_{l(a)}}}{\pi_{\Theta_a}} \cD_{\alpha}(\Theta_{l(a)}) + \frac{\pi_{\Theta_{r(a)}}}{\pi_{\Theta_a}} \cD_{\alpha}(\Theta_{r(a)}) \right ] \right ).
\end{align*}
Since $\log_{\lambda}$ is a monotonic increasing function and $\Pi$ is fixed for a given problem, the above optimization problem can be reduced to
\begin{eqnarray}
\label{eq:optimization OI}
& \underset{T \in \cT(\B,\cP)}{\operatorname{\min}}  \sum_{a \in \cI} \pi_{\Theta_{a}} \left [ (\lambda - 1)\lambda^{d_a} - \cD_{\alpha}(\Theta_a) + \frac{\pi_{\Theta_{l(a)}}}{\pi_{\Theta_a}} \cD_{\alpha}(\Theta_{l(a)}) + \frac{\pi_{\Theta_{r(a)}}}{\pi_{\Theta_a}} \cD_{\alpha}(\Theta_{r(a)}) \right ]. &
\end{eqnarray}
As we show in Section \ref{sec:OIsc1}, this optimization problem is a generalized version of an optimization problem that is NP-complete. Hence, we propose a suboptimal approach to solve this optimization problem where we minimize the objective function locally instead of globally. We take a top-down approach and minimize the objective function by minimizing the term $\pi_{\Theta_{a}} \left [ (\lambda - 1)\lambda^{d_a} - \cD_{\alpha}(\Theta_a) + \frac{\pi_{\Theta_{l(a)}}}{\pi_{\Theta_a}} \cD_{\alpha}(\Theta_{l(a)}) + \frac{\pi_{\Theta_{r(a)}}}{\pi_{\Theta_a}} \cD_{\alpha}(\Theta_{r(a)}) \right ]$ at each internal node, starting from the root node. Note that the terms that depend on the query chosen at node `$a$' are $\pi_{\Theta_{l(a)}},\pi_{\Theta_{r(a)}},\cD_{\alpha}(\Theta_{l(a)})$ and $\cD_{\alpha}(\Theta_{r(a)})$. Hence, the objective function to be minimized at each internal node reduces to $C_a:= \frac{\pi_{\Theta_{l(a)}}}{\pi_{\Theta_a}} \cD_{\alpha}(\Theta_{l(a)}) + \frac{\pi_{\Theta_{r(a)}}}{\pi_{\Theta_a}} \cD_{\alpha}(\Theta_{r(a)})$. The algorithm, which we refer to as $\lambda$-GBS, can be summarized as shown in Algorithm \ref{algo_OI}.

\restylealgo{boxed}
%\linesnumbered
\begin{algorithm}
\dontprintsemicolon
\caption{Greedy decision tree algorithm for minimizing average exponential depth \label{algo_OI}}
\textbf{\underline{$\lambda$-GBS}} \;
\BlankLine
\textbf{Initialization :}  \emph{Let the leaf set consist of the root node}, $Q_{root} = \emptyset$ \;
\SetLine
\While{some leaf node `$a$' has $|\Theta_a| > 1$}{
\For{each query $q \in Q \setminus Q_a$}{
Find $\Theta_{l(a)}$ and $\Theta_{r(a)}$ produced by making a split with query $q$\;
Compute the cost $C_a(q)$ of  making a split with query $q$ \;
}
Choose a query with the least cost $C_a$ at node `$a$' \;
Form child nodes $l(a),r(a)$\;
}
\end{algorithm} 

In the following two sections, we show that in the limiting case when $\lambda$ tends to one, where the average exponential depth reduces to the average linear depth, $\lambda$-GBS reduces to GBS, and in the case when $\lambda$ tends to infinity, $\lambda$-GBS reduces to GBS with the uniform prior $\pi_i = 1/M$.

% ---------------------------------- OI - Average case -----------------------------

\subsection{Average case}
\label{sec:OIsc1}
We now present with an exact formula for the average number of queries $L_1(\cP)$ required to identify an unknown object $\theta$ using a given tree, and show that GBS is a greedy algorithm to minimize this expression. First, we define a parameter called the \emph{reduction factor} on the binary matrix/tree combination that provides a useful quantification of the cost function $L_1(\cP)$.
\begin{defn}[\textbf{Reduction factor}] Let $T$ be a decision tree constructed on the query learning problem $(\B,\cP)$. The \emph{reduction factor} at any internal node `$a$' in the tree is defined by $\rho_a = \max \{\pi_{\Theta_{l(a)}},\pi_{\Theta_{r(a)}}\}/\pi_{\Theta_a}$.
\end{defn}
Note that $0.5 \leq \rho_a \leq 1$.

\begin{cor}
\label{cor:OIsc1}
The expected number of queries required to identify an unknown object using a tree $T \in \cT(\B,\cP)$ is given by
\begin{equation}
\label{eq:OIsc1}
L_1(\cP) = H(\cP) + \sum_{a \in \cI} \pi_{\Theta_{a}}[1 - H(\rho_a)] 
\end{equation}
where $H(\cdot)$ denotes the Shannon entropy.
\end{cor}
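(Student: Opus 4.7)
The plan is to avoid passing to the limit $\lambda \to 1$ in Theorem~\ref{thm:OI} (which would require Taylor-expanding $\cD_\alpha(\Theta_a)$ around $\alpha = 1$, using $(\pi_i/\pi_{\Theta_a})^\alpha = (\pi_i/\pi_{\Theta_a})[1 + (\alpha-1)\ln(\pi_i/\pi_{\Theta_a})] + O((\alpha-1)^2)$ together with $\alpha - 1 = -\log_2 \lambda/(1+\log_2 \lambda)$) and instead give a direct combinatorial argument. The key observation is that the claimed formula decouples into two independent leaf/internal-node identities, each of which reduces to a single swap of summation over the tree.

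First I would rewrite the depth of leaf $\theta_i$ as $d_i = \sum_{a \in \cI} \mathbf{1}[\theta_i \in \Theta_a]$, since $d_i$ equals the number of internal nodes on its root-to-leaf path. Substituting into $L_1(\cP) = \sum_i \pi_i d_i$ and swapping the order of summation yields immediately
\[
L_1(\cP) = \sum_{a \in \cI} \pi_{\Theta_a}.
\]
Next I would prove the companion identity $H(\cP) = \sum_{a \in \cI} \pi_{\Theta_a} H(\rho_a)$. At each internal node $a$, using $\pi_{\Theta_{l(a)}} + \pi_{\Theta_{r(a)}} = \pi_{\Theta_a}$ and the symmetry $H(\rho) = H(1-\rho)$ of the binary entropy, direct algebra gives
\[
\pi_{\Theta_a} H(\rho_a) = \pi_{\Theta_a} \log_2 \pi_{\Theta_a} - \pi_{\Theta_{l(a)}} \log_2 \pi_{\Theta_{l(a)}} - \pi_{\Theta_{r(a)}} \log_2 \pi_{\Theta_{r(a)}}.
\]
Summing over $a \in \cI$ telescopes: the $+\pi_n \log_2 \pi_n$ term at each non-root internal node $n$ is cancelled by the $-\pi_n \log_2 \pi_n$ term generated when $n$ appears as a child of its parent, so what survives is the root's $+$ contribution (equal to $0$ since $\pi_{\text{root}} = 1$) minus the leaves' contributions, namely $-\sum_{i=1}^M \pi_i \log_2 \pi_i = H(\cP)$.

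Subtracting the two identities gives $L_1(\cP) - H(\cP) = \sum_{a \in \cI} \pi_{\Theta_a}[1 - H(\rho_a)]$, which is the stated formula. The main obstacle is the bookkeeping of the telescoping sum in the second identity — specifically, verifying that every non-root tree node appears exactly once as a child of its parent, so that only the root and the leaves are left uncancelled; the convention $\lim_{\pi \to 0} \pi \log_2 \pi = 0$ handles any zero-probability leaves without modification.
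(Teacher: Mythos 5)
Your proof is correct, but it takes a genuinely different route from the paper's. The paper obtains Corollary~\ref{cor:OIsc1} as the $\lambda \to 1$ limit of Theorem~\ref{thm:OI}, which requires L'H\^{o}pital's rule together with a first-order expansion of $\cD_{\alpha}(\Theta_a)$ around $\alpha = 1$ (the argument of $\log_{\lambda}$ tends to $1$ while $\ln\lambda \to 0$, so the limit is a genuine $0/0$ form); Theorem~\ref{thm:OI} itself rests on two inductive decomposition lemmas in the appendix. You instead prove the two specialized identities $\sum_i \pi_i d_i = \sum_{a \in \cI}\pi_{\Theta_a}$ and $H(\cP) = \sum_{a \in \cI}\pi_{\Theta_a}H(\rho_a)$ directly --- the first by swapping the order of summation after writing $d_i$ as a count of ancestors, the second by the telescoping chain rule for Shannon entropy --- and subtract. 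These are exactly the $\lambda \to 1$ (equivalently $\alpha \to 1$) shadows of Lemmas~\ref{lem:Llambda decomposition} and~\ref{lem:Halpha decomposition}, so the underlying structure is the same, but your derivation is self-contained, more elementary, and sidesteps the limit computation entirely; what it gives up is generality, since the paper's single exponential-cost formula also yields Corollaries~\ref{cor:OIsc2}, \ref{cor:GIsc1} and~\ref{cor:GIsc2}. Two small points worth making explicit: the telescoping argument uses that every non-root node of the tree is the child of exactly one internal node, and that in $\cT(\B,\cP)$ every leaf contains exactly one object, so $-\sum_{j \in \cL}\pi_{\Theta_j}\log_2\pi_{\Theta_j}$ really is $H(\cP)$; both hold here, and the convention $\lim_{\pi\to 0}\pi\log_2\pi = 0$ covers any zero-mass nodes, as you note.
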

\begin{proof}
The result follows from Theorem \ref{thm:OI} by taking the limit as $\lambda$ tends to $1$ and applying L'H\^{o}pital's rule on both sides of (\ref{eq:OI}).
\end{proof}

This corollary re-iterates an earlier observation that the expected number of queries required to identify an unknown object using a tree $T$ is bounded below by the Shannon entropy $H(\cP)$. Besides, it presents the exact formula for the gap in this lower bound. It also follows from the above result that a tree attains this minimum value (i.e., $L_1(\cP) = H(\cP)$) iff the reduction factors are equal to $0.5$ at each internal node in the tree.

Using this result, the problem of finding a decision tree with minimum $L_1(\cP)$ can be formulated as the following optimization problem:
\begin{eqnarray}
\label{eq:optimization OIsc1}
& \underset{T \in \cT(\B,\cP)}{\operatorname{\min}} H(\cP ) + \sum_{a \in \cI} \pi_{\Theta_a}[1 - H(\rho_a )]. &
\end{eqnarray}
Since $\cP$ is fixed, this optimization problem reduces to minimizing $\sum_{a \in \cI} \pi_{\Theta_a}[1 - H(\rho_a)]$ over the set of trees $\cT(\B,\cP)$. Note that this optimization problem is a special case of the optimization problem in (\ref{eq:optimization OI}). As mentioned earlier, finding a global optimal solution for this optimization problem is NP-complete \cite{rivest}. Instead, we may take a top down approach and minimize the objective function by minimizing the term $C_a := \pi_{\Theta_a} [1 - H(\rho_a)]$ at each internal node, starting from the root node. Note that the only term that depends on the query chosen at node `$a$' in this cost function is $\rho_a$. Hence the algorithm reduces to minimizing $\rho_a$ (i.e., choosing a split as balanced as possible) at each internal node $a \in \cI$. As a result, $\lambda$-GBS reduces to GBS in this case. Finally, generalized binary search (GBS) is summarized in Algorithm \ref{algo_OIsc1}.

\restylealgo{boxed}
%\linesnumbered
\begin{algorithm}
\dontprintsemicolon
\caption{Greedy decision tree algorithm for minimizing average depth \label{algo_OIsc1}}
\textbf{\underline{Generalized Binary Search (GBS)}} \;
\BlankLine
\textbf{Initialization :}  \emph{Let the leaf set consist of the root node}, $Q_{root} = \emptyset$ \;
\SetLine
\While{some leaf node `$a$' has $|\Theta_a| > 1$}{
\For{each query $q \in Q \setminus Q_a$}{
Find $\Theta_{l(a)}$ and $\Theta_{r(a)}$ produced by making a split with query $q$\;
Compute $\rho_a$ produced by making a split with query $q$\;
}
Choose a query with the least $\rho_a$ at node `$a$'\;
Form child nodes $l(a),r(a)$\;
}
\end{algorithm}

% ------------------------------- OI - Worst case -------------------------------

\subsection{Worst case}
\label{sec:OCsc2}

Here, we present the other limiting case of the family of greedy algorithms $\lambda$-GBS, $\lambda \to \infty$. As noted in Section \ref{sec:OI}, the exponential cost function $L_{\lambda}(\cP)$ reduces to the worst case depth of any leaf node in this case. Note that GBS with the uniform prior is an intuitive algorithm for minimizing the worst case depth. Here, we present a theoretical justification for the same. 
\begin{cor}
\label{cor:OIsc2}
In the limiting case when $\lambda \to \infty$, the optimization problem
\begin{align*}
\min \log_{\lambda} \left ( \frac{\pi_{\Theta_{l(a)}}}{\pi_{\Theta_a}} \cD_{\alpha}(\Theta_{l(a)}) + \frac{\pi_{\Theta_{r(a)}}}{\pi_{\Theta_a}} \cD_{\alpha}(\Theta_{r(a)}) \right ) \to \min \max \{ |\Theta_{l(a)}| , |\Theta_{r(a)}| \} 
\end{align*}
\end{cor}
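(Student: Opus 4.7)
The plan is to track how $\cD_\alpha(\Theta)$ behaves as $\lambda \to \infty$ (equivalently, as $\alpha \to 0^+$), substitute into the objective, and verify that the surrounding $\log_\lambda$ picks out the larger exponent.

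First, I would rewrite $\cD_\alpha(\Theta_a)$ in a more revealing form. With $\alpha = 1/(1+\log_2\lambda)$, a short calculation shows that $(\log_2 \lambda)/(1-\alpha) = 1/\alpha$, so letting $P^{(a)}$ denote the normalized probability vector $(\pi_i/\pi_{\Theta_a})_{\theta_i \in \Theta_a}$, one obtains the identity $\cD_\alpha(\Theta_a) = \lambda^{H_\alpha(P^{(a)})}$, tying the "$\alpha$-norm" in the statement directly to the Rényi entropy already appearing in equation (\ref{eq:renyi entropy definition}).

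Second, as $\lambda \to \infty$ we have $\alpha \to 0^+$, and $(\pi_i/\pi_{\Theta_a})^\alpha \to 1$ for each $i$ with $\pi_i>0$, so $\sum_{\theta_i \in \Theta_a}(\pi_i/\pi_{\Theta_a})^\alpha \to |\Theta_a|$ and hence $H_\alpha(P^{(a)}) \to \log_2 |\Theta_a|$ (the Hartley entropy). Consequently $\cD_\alpha(\Theta_a) = \lambda^{\log_2|\Theta_a| + o(1)}$. Substituting this into the expression appearing in the corollary, and writing $w_l = \pi_{\Theta_{l(a)}}/\pi_{\Theta_a}$, $w_r = \pi_{\Theta_{r(a)}}/\pi_{\Theta_a}$ (both strictly positive for any nontrivial split), the quantity to be minimized becomes
\begin{align*}
\log_\lambda\!\left(w_l\,\lambda^{\log_2|\Theta_{l(a)}| + o(1)} + w_r\,\lambda^{\log_2|\Theta_{r(a)}| + o(1)}\right).
\end{align*}
Using the elementary fact that $\log_\lambda(c_1\lambda^{x_1} + c_2\lambda^{x_2}) \to \max\{x_1,x_2\}$ as $\lambda \to \infty$ whenever $c_1,c_2>0$ are independent of $\lambda$, the display tends to $\max\{\log_2|\Theta_{l(a)}|, \log_2|\Theta_{r(a)}|\} = \log_2 \max\{|\Theta_{l(a)}|,|\Theta_{r(a)}|\}$. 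Since $\log_2$ is strictly increasing, minimizing this limit is equivalent to minimizing $\max\{|\Theta_{l(a)}|,|\Theta_{r(a)}|\}$, which is exactly the claim.

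The main technicality is justifying the $\log_\lambda$-limit of the two-term sum: this requires $w_l,w_r>0$ (trivial splits can be excluded from the minimization) together with a mild uniformity argument to ensure that the $o(1)$ inside each exponent is dominated by $(\log_\lambda)^{-1}$ and hence does not affect the limit. No deeper obstacle is expected, and the remainder of the argument reduces to the $\alpha \to 0$ limit of the Rényi entropy and the max-dominance property of exponentials.
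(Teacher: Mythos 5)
Your proof is correct and follows essentially the same route as the paper: both compute the limit of $\log_{\lambda}$ of the weighted sum of $\cD_{\alpha}$ terms and obtain $\max\{\log_2|\Theta_{l(a)}|,\log_2|\Theta_{r(a)}|\}$, then invoke monotonicity. The paper simply asserts this limit via L'H\^{o}pital's rule, whereas you supply the details explicitly through the identity $\cD_{\alpha}(\Theta_a)=\lambda^{H_{\alpha}(P^{(a)})}$, the $\alpha\to 0$ limit of the R\'{e}nyi entropy, and the max-dominance of exponential sums; this is a more careful write-up of the same argument rather than a different one.
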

\begin{proof}
Applying L'H\^{o}pital's rule, we get
\begin{align*}
\underset{\lambda \to \infty} {\operatorname{\lim}} \log_{\lambda} \left ( \frac{\pi_{\Theta_{l(a)}}}{\pi_{\Theta_a}} \cD_{\alpha}(\Theta_{l(a)}) + \frac{\pi_{\Theta_{r(a)}}}{\pi_{\Theta_a}} \cD_{\alpha}(\Theta_{r(a)}) \right ) = \max \{ \log_2 |\Theta_{l(a)}|, \log_2 |\Theta_{r(a)}| \}
\end{align*}
Since $\log_2$ is a monotonic increasing function, the optimization problem, $\min \max \{ \log_2 |\Theta_{l(a)}|, \log_2 |\Theta_{r(a)}| \}$ is equivalent to the optimization problem, $\min \max \{ |\Theta_{l(a)}| , |\Theta_{r(a)}| \}$. 
\end{proof}

Note that the cost function minimized at each internal node of a tree in $\lambda$-GBS is $C_a := \frac{\pi_{\Theta_{l(a)}}}{\pi_{\Theta_a}} \cD_{\alpha}(\Theta_{l(a)}) + \frac{\pi_{\Theta_{r(a)}}}{\pi_{\Theta_a}} \cD_{\alpha}(\Theta_{r(a)})$. Since $\log_{\lambda}$ is a monotonic function, this is equivalent to minimizing the function $\log_{\lambda} (C_a)$. We know from Corollary \ref{cor:OIsc2} that in the limiting case when $\lambda$ tends to infinity, this reduces to minimizing $\max\{ |\Theta_{l(a)}|,|\Theta_{r(a)}| \}$. Hence, in this limiting case, $\lambda$-GBS reduces to GBS with uniform prior, thereby completely eliminating the dependence of the algorithm on the prior distribution $\cP$. More generally, as $\lambda$ increases, $\lambda$-GBS becomes less sensitive to the prior distribution, and therefore more robust if the prior is misspecified. 

% ----------------------------------------------------------------------------------------------
% ------------------------------- Group Identification -----------------------------

\section{Group Identification}
\label{sec:GI}

In this section, we consider the problem of identifying the group of an unknown object $\theta \in \Theta$, rather than the object itself, with as few queries as possible. Here, in addition to the binary matrix $\B$ and \emph{a priori} probability distribution $\cP$ on the objects, the group labels for the objects are also provided, where the groups are assumed to be disjoint.

We denote a query learning problem for group identification by $(\B,\cP,\vy)$, where $\vy = (y_1,\cdots,y_M)$ denotes the group labels of the objects, $y_k \in \{1,\cdots,m\}$. Let $\{\Theta^i\}_{i=1}^m$ be the partition of the object set $\Theta$, where $\Theta^i = \{\theta_k \in \Theta: y_k = i\}$. It is important to note here that the group identification problem cannot be simply reduced to a standard query learning problem with groups $\{\Theta^1,\cdots,\Theta^m\}$ as meta ``objects,'' since the objects within a group need not respond the same to each query. For example, consider the toy example shown in Figure \ref{fig:toy network} where the objects $\theta_1,\theta_2$ and $\theta_3$ belonging to group $1$ cannot be considered as one single meta object as these objects respond differently to queries $q_1$ and $q_3$. 

In this context, we also note that GBS can fail to find a good solution for a group identification problem as it does not take the group labels into consideration while choosing queries. Once again, consider the toy example shown in Figure \ref{fig:toy network} where just one query (query $q_2$) is sufficient to identify the group of an unknown object, whereas GBS requires $2$ queries to identify the group when the unknown object is either $\theta_2$ or $\theta_4$. Here, we propose a natural extension of $\lambda$-GBS to the problem of group identification. Specifically, we propose a family of greedy algorithms that aim to minimize the average exponential cost for the problem of group identification.

\begin{figure}[!t]
\begin{minipage}[b]{0.5\linewidth}
\centering
\begin{tabular}{|c|c c c|c|}
\hline
& $q_1$ & $q_2$ & $q_3$ & Group label, $y$ \\
\hline
$\theta_1$ & 0 & 1 & 1 & 1 \\
$\theta_2$ & 1 & 1 & 0 & 1 \\
$\theta_3$ & 0 & 1 & 0 & 1 \\
$\theta_4$ & 1 & 0 & 0 & 2 \\
\hline
\end{tabular}
\caption{Toy Example}
\label{fig:toy network}
\end{minipage}
\hspace{0.35cm}
\begin{minipage}[b]{0.5\linewidth}
\centering
\begin{displaymath}
\xymatrix{
  &  *++[o][F-]{q_1} \ar[dl]_0 \ar[dr]^1 & &  \\
y = 1 & & *++[o][F-]{q_2} \ar[dl]_0 \ar[dr]^1 & \\
& y = 2 & & y = 1 \\
}
\end{displaymath}
\caption{Decision tree constructed using GBS}
\label{fig:toy tree}
\end{minipage}
\end{figure}

Note that when constructing a tree for group identification, a greedy, top-down algorithm, terminates splitting when all the objects at the node belong to the same group. Hence, a tree constructed in this fashion can have multiple objects ending in the same leaf node and multiple leaves ending in the same group. 

For a tree with $L$ leaves, we denote by $\cL^i \subset \cL = \{1,\cdots,L\}$ the set of leaves that terminate in group $i$. Similar to $\Theta^i \subseteq \Theta$, we denote by $\Theta_a^i \subseteq \Theta_a$ the set of objects belonging to group $i$ that reach internal node $a \in \cI$ in the tree.  

Given $(\B,\cP,\vy)$, let $\cT(\B,\cP,\vy)$ denote the set of decision trees that can uniquely identify the groups of all objects in the set $\Theta$. For any decision tree $T \in \cT(\B,\cP,\vy)$, let $d_j$ denote the depth of leaf node $j \in \cL$. Let random variable $X$ denote the exponential cost incurred in identifying the group of an unknown object $\theta \in \Theta$. Then, the average exponential cost $L_{\lambda}(\cP)$ of identifying the group of the unknown object $\theta$ using a given tree is defined as
\begin{align*}
\lambda^{L_{\lambda}(\cP)} & =  \sum_{i=1}^m \prob(\theta \in \Theta^i) \expect[X |\theta \in \Theta^i] \\
 & =  \sum_{i=1}^m \pi_{\Theta^i} \left [ \sum_{j \in \cL^i} \frac{\pi_{\Theta_j}}{\pi_{\Theta^i}} \lambda^{d_j} \right ] \\
\Longrightarrow L_{\lambda}(\cP) & = \log_{\lambda} \left ( \sum_{i=1}^m \pi_{\Theta^i} \left [ \sum_{j \in \cL^i} \frac{\pi_{\Theta_j}}{\pi_{\Theta^i}} \lambda^{d_j} \right ] \right )
\end{align*}
In the limiting case when $\lambda$ tends to one and infinity, the cost function $L_{\lambda}(\cP)$ reduces to
\begin{align*}
L_1(\cP) & := \underset{\lambda \to 1}{\operatorname{\lim}} L_{\lambda}(\cP) = \sum_{i=1}^m \pi_{\Theta^i} \left [ \sum_{j \in \cL^i} \frac{\pi_{\Theta_j}}{\pi_{\Theta^i}} d_j \right ] \\
L_{\infty}(\cP) & := \underset{\lambda \to \infty}{\operatorname{\lim}} L_{\lambda}(\cP) = \underset{j \in \cL}{\operatorname{\max}} \ \ d_j.
\end{align*}

\begin{thm}
\label{thm:GI}
For any $\lambda \geq 1$, the average exponential cost $L_{\lambda}(\cP)$ of identifying the group of an object using a tree $T \in \cT(\B,\cP,\vy)$ is given by
\begin{align}
\label{eq:GI}
L_{\lambda}(\cP) = \log_{\lambda} \left ( \lambda^{H_{\alpha}(\Pi_{\vy})} + \sum_{a \in \cI} \pi_{\Theta_{a}} \left [ (\lambda - 1)\lambda^{d_a} - \cD_{\alpha}(\Theta_a) + \frac{\pi_{\Theta_{l(a)}}}{\pi_{\Theta_a}} \cD_{\alpha}(\Theta_{l(a)}) + \frac{\pi_{\Theta_{r(a)}}}{\pi_{\Theta_a}} \cD_{\alpha}(\Theta_{r(a)})  \right ] \right )
\end{align}
where $\cP_{\vy} = (\pi_{\Theta^1},\cdots,\pi_{\Theta^m})$ denotes the probability distribution of the object groups induced by the labels $\vy$ and $\cD_{\alpha}(\Theta_a) := \left [ \sum_{i=1}^m \left ( \frac{\pi_{\Theta_a^i}}{\pi_{\Theta_a}} \right )^{\alpha} \right ]^{1/\alpha}$ with $\alpha = \frac{1}{1+\log_2\lambda}$, $\pi_{\Theta^i} = \underset{\{k:y_k = i\}}{\operatorname{\sum}} \pi_k$ and $\pi_{\Theta_a^i} = \underset{\{k:\theta_k \in \Theta_a, y_k = i \}}{\operatorname{\sum}} \pi_k$.
\end{thm}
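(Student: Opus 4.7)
The plan is to start from the definition
\[
\lambda^{L_{\lambda}(\cP)} \;=\; \sum_{i=1}^m \pi_{\Theta^i}\sum_{j\in\cL^i}\frac{\pi_{\Theta_j}}{\pi_{\Theta^i}}\lambda^{d_j}\;=\;\sum_{j\in\cL}\pi_{\Theta_j}\lambda^{d_j},
\]
so the group structure collapses into a single weighted sum over leaves. The remaining task is purely a tree-telescoping identity, and I will split it into two independent telescopings whose outputs match the two kinds of terms appearing in the RHS of \eqref{eq:GI}.

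For the $\lambda^{d_a}$ terms, I will use the elementary identity $\lambda^{d_j}-1=(\lambda-1)\sum_{k=0}^{d_j-1}\lambda^k$. Inserting this and swapping the order of summation (each internal node $a$ lies on the root-to-leaf paths of exactly the leaves in $\Theta_a$, which together carry mass $\pi_{\Theta_a}$) gives
\[
\sum_{j\in\cL}\pi_{\Theta_j}\lambda^{d_j}\;=\;\sum_{j\in\cL}\pi_{\Theta_j}\;+\;(\lambda-1)\sum_{a\in\cI}\pi_{\Theta_a}\lambda^{d_a}.
\]
This produces exactly the $(\lambda-1)\lambda^{d_a}$ contribution in \eqref{eq:GI}, so the remaining work is to rewrite $\sum_{j\in\cL}\pi_{\Theta_j}$ in the desired form.

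For that second piece, I will introduce the shorthand $N(a):=\pi_{\Theta_a}\cD_\alpha(\Theta_a)=\bigl[\sum_i \pi_{\Theta_a^i}^\alpha\bigr]^{1/\alpha}$. Two observations make the telescoping work. First, for a leaf $j\in\cL^i$ only one group is represented, so $N(j)=\pi_{\Theta_j}$, which lets me replace $\sum_{j\in\cL}\pi_{\Theta_j}$ by $\sum_{j\in\cL}N(j)$. Second, summing $N(l(a))+N(r(a))-N(a)$ over $a\in\cI$ cancels $N(a)$ at every non-root internal node, leaving
\[
\sum_{j\in\cL}N(j)\;=\;N(\text{root})\;+\;\sum_{a\in\cI}\bigl[N(l(a))+N(r(a))-N(a)\bigr],
\]
and factoring $\pi_{\Theta_a}$ out of each summand on the right reproduces the bracketed $\cD_\alpha$ expression in \eqref{eq:GI}.

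The last step is to identify $N(\text{root})$ with $\lambda^{H_\alpha(\cP_\vy)}$. Using $\alpha=1/(1+\log_2\lambda)$ one computes $\log_2\lambda/(1-\alpha)=1/\alpha$, so $\lambda^{H_\alpha(\cP_\vy)}=\bigl(\sum_i\pi_{\Theta^i}^\alpha\bigr)^{1/\alpha}=N(\text{root})$ because at the root $\pi_{\Theta_{\text{root}}^i}=\pi_{\Theta^i}$. Assembling the two telescopings and taking $\log_\lambda$ yields \eqref{eq:GI}. The only genuinely delicate point is getting the telescoping of $N$ right — one has to notice that a leaf in the group-identification tree may carry mass from several objects yet still represent only one group, which is exactly what makes $N(j)=\pi_{\Theta_j}$ at leaves and thereby closes the argument. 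Theorem \ref{thm:OI} follows as the special case where every leaf contains a single object, so $\cD_\alpha$ reduces to its object-level version and $\cP_\vy=\cP$.
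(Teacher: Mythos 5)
Your proof is correct, and it establishes exactly the same two decomposition identities that the paper isolates as Lemmas~\ref{lem:Llambda decomposition} and~\ref{lem:Halpha decomposition}; the difference is purely in how those identities are proved. For the depth term, the paper proves $\sum_{j\in\cL}\pi_{\Theta_j}\sum_{k=0}^{d_j-1}\lambda^k=\sum_{a\in\cI}\lambda^{d_a}\pi_{\Theta_a}$ by a three-case induction over subtrees, whereas you get it in one line by writing $\lambda^{d_j}-1=(\lambda-1)\sum_{k=0}^{d_j-1}\lambda^k$ and exchanging the order of summation over leaves and the internal nodes on their root-to-leaf paths. For the entropy term, the paper again inducts on subtrees via the auxiliary quantity $\Ht_{\alpha}^a$, whereas you observe directly that $\sum_{a\in\cI}\bigl[N(l(a))+N(r(a))-N(a)\bigr]$ telescopes because every non-root node occurs exactly once as a child, leaving $\sum_{j\in\cL}N(j)-N(\mathrm{root})$. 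Your key observation --- that a leaf of a group-identification tree is pure in group label, so $N(j)=\pi_{\Theta_j}$ even when the leaf contains several objects --- is the same fact the paper uses in the base case of its induction for Lemma~\ref{lem:Halpha decomposition}, and your identification of $N(\mathrm{root})$ with $\lambda^{H_{\alpha}(\cP_{\vy})}$ matches the paper's computation in (\ref{relation_H_Ht_first})--(\ref{relation_H_Ht_second}). The double-counting/telescoping route is shorter and makes it more transparent \emph{why} the decomposition holds (each internal node is counted once per leaf below it, with weight equal to the mass it carries); the paper's inductive route is more mechanical but generalizes without thought to situations where one wants per-subtree versions of the identities. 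Either way the assembled identity, after taking $\log_{\lambda}$, is (\ref{eq:GI}), and your remark that Theorem~\ref{thm:OI} is the singleton-group special case agrees with the paper.
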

\begin{proof}
See Appendix.
\end{proof}

Note that the definition of $\cD_{\alpha}(\Theta_a)$ in this theorem is a generalization of that in Theorem \ref{thm:OI}. The above theorem states that given a query learning problem for group identification $(\B,\cP,\vy)$, the exponential cost function $L_{\lambda}(\cP)$ is bounded below by the $\alpha$-R\'{e}nyi entropy of the probability distribution of the groups. It also explicitly states the gap in this lower bound. Note that Theorem \ref{thm:OI} is a special case of this theorem where each group is of size $1$.

Using Theorem \ref{thm:GI}, the problem of finding a decision tree with minimum cost function $L_{\lambda}(\cP)$ can be formulated as the following optimization problem:
\begin{eqnarray}
\label{eq:optimization GI}
& \underset{T \in \cT(\B,\cP,\vy)}{\operatorname{\min}}  \sum_{a \in \cI} \pi_{\Theta_{a}} \left [ (\lambda - 1)\lambda^{d_a} - \cD_{\alpha}(\Theta_a) + \frac{\pi_{\Theta_{l(a)}}}{\pi_{\Theta_a}} \cD_{\alpha}(\Theta_{l(a)}) + \frac{\pi_{\Theta_{r(a)}}}{\pi_{\Theta_a}} \cD_{\alpha}(\Theta_{r(a)}) \right ]. &
\end{eqnarray}
This optimization problem being the generalized version of the optimization problem in (\ref{eq:optimization OI}) is NP-complete. Hence, we propose a suboptimal approach to solve this optimization problem where we solve the objective function locally instead of globally. We take a top-down approach and minimize the objective function by minimizing the term $C_a := \frac{\pi_{\Theta_{l(a)}}}{\pi_{\Theta_a}} \cD_{\alpha}(\Theta_{l(a)}) + \frac{\pi_{\Theta_{r(a)}}}{\pi_{\Theta_a}} \cD_{\alpha}(\Theta_{r(a)})$ at each internal node, starting from the root node. The algorithm, which we refer to as $\lambda$-GGBS, is summarized in Algorithm \ref{algo_GI}.

\restylealgo{boxed}
%\linesnumbered
\begin{algorithm}
\dontprintsemicolon
\caption{Greedy decision tree algorithm for group identification that minimizes average exponential cost \label{algo_GI}}

\textbf{\underline{$\lambda$ Group identification Generalized Binary Search ($\lambda$-GGBS)}} \;
\BlankLine
\textbf{Initialization :}  \emph{Let the leaf set consist of the root node}, $Q_{root} = \emptyset$ \;
\SetLine
\While{some leaf node `$a$' has more than one group of objects}{
\For{each query $q \in Q \setminus Q_a$}{
Compute $\{\Theta_{l(a)}^i\}_{i=1}^m$ and $\{\Theta_{r(a)}^i\}_{i=1}^m$ produced by making a split with query $q$\;
Compute the cost $C_a(q)$ of making a split with query $q$\;
}
Choose a query with the least cost $C_a$ at node `$a$'\;
Form child nodes $l(a),r(a)$\;
}
\end{algorithm}

% --------------------------------- GI - Average case ------------------------------

\subsection{Average case}
\label{sec:GIsc1}

The interpretation of $\lambda$-GGBS is somewhat easier in the limiting case when $\lambda$ tends to one. In addition to the reduction factor defined in Section \ref{sec:OIsc1}, we define a new parameter called the \emph{group reduction factor} for each group $i \in \{1,\cdots,m\}$ at each internal node.

\begin{defn}[\textbf{Group reduction factor}] Let $T$ be a decision tree constructed on the query learning problem for group identification $(\B,\cP,\vy)$. The \emph{group reduction factor} for any group $i$ at an internal node `$a$' in the tree is defined by $\rho_a^i = \max\{\pi_{\Theta_{l(a)}^i},\pi_{\Theta_{r(a)}^i}\}/\pi_{\Theta_a^i}$.
\end{defn} 

\begin{cor}
\label{cor:GIsc1}
The expected number of queries required to identify the group of an unknown object using a tree $T \in \cT(\B,\cP,\vy)$ is given by
\begin{equation}
\label{eq:GIsc1}
L_1(\cP) = H(\cP_{\vy}) + \sum_{a \in \cI} \pi_{\Theta_{a}}\left [1 - H(\rho_a) + \sum_{i=1}^m \frac{\pi_{\Theta_a^i}}{\pi_{\Theta_a}} H(\rho_a^i)  \right ] 
\end{equation}
where $\cP_{\vy} = (\pi_{\Theta^1},\cdots,\pi_{\Theta^m})$ denotes the probability distribution of the object groups induced by the labels $\vy$ and $H(\cdot)$ denotes the Shannon entropy.
\end{cor}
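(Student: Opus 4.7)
The plan is to derive (\ref{eq:GIsc1}) from Theorem \ref{thm:GI} by taking $\lambda \to 1$ and applying L'H\^{o}pital's rule, in the same spirit as the proof of Corollary \ref{cor:OIsc1}. Since $\alpha = 1/(1+\log_2\lambda) \to 1$, each of $\cD_\alpha(\Theta_a), \cD_\alpha(\Theta_{l(a)}), \cD_\alpha(\Theta_{r(a)}) \to 1$, and $(\lambda-1)\lambda^{d_a} \to 0$, so the argument of $\log_\lambda$ in (\ref{eq:GI}) tends to $1$ while $\ln\lambda \to 0$, producing an indeterminate $0/0$.

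The clean way to expand is to notice that the identity $(1-\alpha)/\alpha = \log_2\lambda$, together with the definition of $\cD_\alpha$, gives
\[
\cD_\alpha(\Theta_a) \;=\; \lambda^{H_\alpha(p_a)}, \qquad p_a := \left(\tfrac{\pi_{\Theta_a^i}}{\pi_{\Theta_a}}\right)_{i=1}^m,
\]
and similarly for the children. Setting $\epsilon := \ln\lambda$ and using $H_\alpha(\cdot) \to H(\cdot)$ as $\alpha\to 1$, I would Taylor-expand each term to first order in $\epsilon$: $\cD_\alpha(\Theta_a) = 1 + H(p_a)\,\epsilon + O(\epsilon^2)$ and $(\lambda-1)\lambda^{d_a} = \epsilon + O(\epsilon^2)$. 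The $O(1)$ pieces of the bracket in (\ref{eq:GI}) cancel because $-1 + \pi_{\Theta_{l(a)}}/\pi_{\Theta_a} + \pi_{\Theta_{r(a)}}/\pi_{\Theta_a} = 0$, so that bracket $B_a$ satisfies
\[
B_a \;=\; \epsilon\!\left[1 - H(p_a) + \tfrac{\pi_{\Theta_{l(a)}}}{\pi_{\Theta_a}}H(p_{l(a)}) + \tfrac{\pi_{\Theta_{r(a)}}}{\pi_{\Theta_a}}H(p_{r(a)})\right] + O(\epsilon^2).
\]
Dividing the argument of $\log_\lambda$ by $\ln\lambda$ and using $(\lambda^{H_\alpha(\cP_\vy)}-1)/\ln\lambda \to H(\cP_\vy)$, I obtain the intermediate form
\[
L_1(\cP) \;=\; H(\cP_\vy) + \sum_{a \in \cI} \pi_{\Theta_a}\!\left[1 - H(p_a) + \tfrac{\pi_{\Theta_{l(a)}}}{\pi_{\Theta_a}}H(p_{l(a)}) + \tfrac{\pi_{\Theta_{r(a)}}}{\pi_{\Theta_a}}H(p_{r(a)})\right].
\]

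The last step is to convert this to the stated form via the Shannon entropy chain rule applied at each internal node $a$. Consider the joint distribution of (group label, child direction $\in \{l,r\}$) conditioned on reaching $a$. Expanding the joint entropy in the two possible orders gives
\[
H(p_a) + \sum_{i=1}^m \tfrac{\pi_{\Theta_a^i}}{\pi_{\Theta_a}}H(\rho_a^i) \;=\; H(\rho_a) + \tfrac{\pi_{\Theta_{l(a)}}}{\pi_{\Theta_a}}H(p_{l(a)}) + \tfrac{\pi_{\Theta_{r(a)}}}{\pi_{\Theta_a}}H(p_{r(a)}),
\]
since conditioning on group $i$ yields a binary split with probabilities $\rho_a^i$ and $1-\rho_a^i$, while conditioning on the child direction yields $p_{l(a)}$ or $p_{r(a)}$. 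Substituting this identity into the intermediate form converts the bracket to $1 - H(\rho_a) + \sum_i (\pi_{\Theta_a^i}/\pi_{\Theta_a})H(\rho_a^i)$, producing (\ref{eq:GIsc1}). The only delicate point is the $O(1)\to O(\epsilon)$ cancellation inside $B_a$ — without it the limit would diverge — and keeping the expansions consistent between the natural-log $\epsilon = \ln\lambda$ and the base-$2$ definition of the entropies; the chain-rule rewrite at the end is one line of bookkeeping.
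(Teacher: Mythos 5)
Your proposal is correct and follows the same route as the paper, which proves this corollary simply by taking $\lambda \to 1$ in Theorem~\ref{thm:GI} and applying L'H\^{o}pital's rule; your Taylor expansion in $\epsilon = \ln\lambda$ is an equivalent way of carrying out that limit. You also correctly supply the two details the paper leaves implicit: the identity $\cD_\alpha(\Theta_a) = \lambda^{H_\alpha(p_a)}$ (which makes the $O(1)$ cancellation and the first-order coefficient transparent) and the entropy chain-rule identity needed to pass from the limit's natural form $1 - H(p_a) + \frac{\pi_{\Theta_{l(a)}}}{\pi_{\Theta_a}}H(p_{l(a)}) + \frac{\pi_{\Theta_{r(a)}}}{\pi_{\Theta_a}}H(p_{r(a)})$ to the stated form in terms of $\rho_a$ and $\rho_a^i$.
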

\begin{proof}
The result follows from Theorem \ref{thm:GI} by taking the limit as $\lambda$ tends to $1$ and applying L'H\^{o}pital's rule on both sides of (\ref{eq:GI}).
\end{proof}

This corollary states that given a query learning problem for group identification $(\B,\cP,\vy)$, the expected number of queries required to identify the group of an unknown object is lower bounded by the Shannon entropy of the probability distribution of the groups. It also follows from the above result that this lower bound is achieved iff the reduction factor $\rho_a$ is equal to $0.5$ and the group reduction factors $\{\rho_a^i\}_{i=1}^m$ are equal to $1$ at every internal node in the tree. Also, note that the result in Corollary \ref{cor:OIsc1} is a special case of this result where each group is of size $1$ leading to $\rho_a^i = 1$ for all groups at every internal node.

Using this result, the problem of finding a decision tree with minimum $L_1(\cP)$ can be formulated as the following optimization problem:
\begin{eqnarray}
\label{eq:optimization GIsc1}
& \underset{T \in \cT(\B,\cP,\vy)}{\operatorname{\min}}  \sum_{a \in \cI} \pi_{\Theta_a}\left [1 - H(\rho_a) + \sum_{i=1}^m \frac{\pi_{\Theta_a^i}}{\pi_{\Theta_a}} H(\rho_a^i) \right ].  &
\end{eqnarray} 
We propose a greedy top-down approach and minimize the objective function by minimizing the term $ \pi_{\Theta_a} [1  $ $ - H(\rho_a) + \sum_{i=1}^m \frac{\pi_{\Theta_a^i}}{\pi_{\Theta_a}} H(\rho_a^i) ]$ at each internal node, starting from the root node. Note that the terms that depend on the query chosen at node `$a$' are $\rho_a$ and $\rho_a^i$. Hence the algorithm reduces to minimizing $C_a := 1 - H(\rho_a) + \sum_{i=1}^m \frac{\pi_{\Theta_a^i}}{\pi_{\Theta_a}} H(\rho_a^i)$ at each internal node $a \in \cI$. Note that this objective function consists of two terms, the first term $[1 - H(\rho_a)]$ favors queries that evenly distribute the probability mass of the objects at node `$a$' to its child nodes (regardless of the group) while the second term $\sum_i \frac{\pi_{\Theta_a^i}}{\pi_{\Theta_a}} H(\rho_a^i)$ favors queries that transfer an entire group of objects to one of its child nodes. The algorithm, which we refer to as GGBS, is summarized in Algorithm \ref{algo_GIsc1}.

\restylealgo{boxed}
%\linesnumbered
\begin{algorithm}
\dontprintsemicolon
\caption{Greedy decision tree algorithm for group identification that minimizes average linear cost\label{algo_GIsc1}}

\textbf{\underline{Group identification Generalized Binary Search (GGBS)}} \;
\BlankLine
\textbf{Initialization :}  \emph{Let the leaf set consist of the root node}, $Q_{root} = \emptyset$ \;
\SetLine
\While{some leaf node `$a$' has more than one group of objects}{
\For{each query $q \in Q \setminus Q_a$}{
Compute $\{\rho_{a}^i\}_{i=1}^m$ and $\rho_a$ produced by making a split with query $q$\;
Compute the cost $C_a(q)$ of making a split with query $q$\;
}
Choose a query with the least cost $C_a$ at node `$a$'\;
Form child nodes $l(a),r(a)$\;
}
\end{algorithm}

There is an interesting connection between the above algorithm and impurity-based decision tree induction. In particular, the above algorithm is equivalent to the decision tree splitting algorithm used in C$4.5$ software package \cite{quinlan}, based on the entropy impurity measure. See \cite{bellala} for more details on this relation. 

% ----------------------------- GI - Worst case ----------------------------------

\subsection{Worst case}
\label{sec:GIsc2}
We now present $\lambda$-GGBS in the limiting case when $\lambda$ tends to infinity. As noted in Section \ref{sec:GI}, the exponential cost function $L_{\lambda}(\cP)$ reduces to the worst case depth of any leaf node in this limiting case. Let $N_a$ denote the number of groups at any node `$a$' in the tree, i.e., $N_a = |\{i \in \{1,\cdots,m\}:\Theta_a^i \neq \emptyset\}|$.

\begin{cor}
\label{cor:GIsc2}
In the limiting case when $\lambda \to \infty$, the optimization problem
\begin{align*}
\min \log_{\lambda} \left ( \frac{\pi_{\Theta_{l(a)}}}{\pi_{\Theta_a}} \cD_{\alpha}(\Theta_{l(a)}) + \frac{\pi_{\Theta_{r(a)}}}{\pi_{\Theta_a}} \cD_{\alpha}(\Theta_{r(a)}) \right ) \longrightarrow \min \max \{ N_{l(a)} , N_{r(a)}\} 
\end{align*}
where $\cD_{\alpha}(\Theta_a) = \left [ \sum_{i=1}^m \left ( \frac{\pi_{\Theta_a^i}}{\pi_{\Theta_a}} \right )^{\alpha} \right ]^{\frac{1}{\alpha}}$
\end{cor}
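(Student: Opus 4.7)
The plan is to mirror the L'H\^{o}pital-style argument of Corollary \ref{cor:OIsc2}, but with the group-generalized $\cD_\alpha$ of Theorem \ref{thm:GI} in place of the object-level version. First I set $\alpha = 1/(1+\log_2 \lambda)$, so $\alpha \to 0$ as $\lambda \to \infty$. Writing $p_i := \pi_{\Theta_a^i}/\pi_{\Theta_a}$, the sum $\sum_{i=1}^m p_i^\alpha$ receives a contribution only from those groups present at $a$, of which there are $N_a$ by definition; for each such $p_i > 0$, $p_i^\alpha \to 1$, so $\sum_{i} p_i^\alpha \to N_a$.

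Next I would extract the limit of $\log_\lambda \cD_\alpha(\Theta_a)$. Using $1/\alpha = 1 + \log_2 \lambda$ and $\log_\lambda x = (\log_2 x)/(\log_2 \lambda)$,
\[
\log_\lambda \cD_\alpha(\Theta_a) \;=\; \frac{1 + \log_2 \lambda}{\log_2 \lambda}\, \log_2\!\Bigl(\sum_{i:\, p_i > 0} p_i^\alpha\Bigr) \;\longrightarrow\; \log_2 N_a
\]
as $\lambda \to \infty$. Applying this at both children and noting that $\log_\lambda c \to 0$ for any fixed $c > 0$, we obtain $\log_\lambda\!\bigl(c_l\, \cD_\alpha(\Theta_{l(a)})\bigr) \to \log_2 N_{l(a)}$ and $\log_\lambda\!\bigl(c_r\, \cD_\alpha(\Theta_{r(a)})\bigr) \to \log_2 N_{r(a)}$, where $c_l := \pi_{\Theta_{l(a)}}/\pi_{\Theta_a}$ and $c_r := \pi_{\Theta_{r(a)}}/\pi_{\Theta_a}$.

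To combine the two children, I would use the sandwich $M_\alpha \le c_l \cD_\alpha(\Theta_{l(a)}) + c_r \cD_\alpha(\Theta_{r(a)}) \le 2 M_\alpha$, where $M_\alpha := \max\{c_l \cD_\alpha(\Theta_{l(a)}), c_r \cD_\alpha(\Theta_{r(a)})\}$. Since $\log_\lambda 2 \to 0$, taking $\log_\lambda$ throughout and using the previous step yields
\[
\lim_{\lambda \to \infty} \log_\lambda\!\Bigl(c_l \cD_\alpha(\Theta_{l(a)}) + c_r \cD_\alpha(\Theta_{r(a)})\Bigr) \;=\; \max\{\log_2 N_{l(a)},\, \log_2 N_{r(a)}\} \;=\; \log_2 \max\{N_{l(a)}, N_{r(a)}\}.
\]
Because $\log_2$ is strictly increasing, minimizing this pointwise limit over candidate queries at node $a$ is equivalent to minimizing $\max\{N_{l(a)}, N_{r(a)}\}$, which is exactly the stated equivalence.

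The main (mild) obstacle is the sandwich step: both summands $c_l \cD_\alpha(\Theta_{l(a)})$ and $c_r \cD_\alpha(\Theta_{r(a)})$ blow up at potentially different exponential rates like $N_{l(a)}^{1/\alpha}$ and $N_{r(a)}^{1/\alpha}$, and one must verify that $\log_\lambda$ of the sum tracks the larger exponent rather than some mixture of the two. The two-sided bound $M_\alpha \le \text{sum} \le 2 M_\alpha$ together with $\log_\lambda 2 \to 0$ handles this cleanly, requiring no more delicate L'H\^{o}pital bookkeeping than what Corollary \ref{cor:OIsc2} already uses on a single $\cD_\alpha$ term.
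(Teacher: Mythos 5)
Your proof is correct and reaches the paper's conclusion by essentially the same route: evaluate the limit of the $\log_{\lambda}$ objective as $\lambda \to \infty$ (equivalently $\alpha \to 0$) and then invoke monotonicity of $\log_2$. The only real difference is that where the paper simply cites L'H\^{o}pital's rule for the limit, you compute it directly via $\sum_i p_i^{\alpha} \to N_a$ together with the two-sided bound $M_{\alpha} \le c_l \cD_{\alpha}(\Theta_{l(a)}) + c_r \cD_{\alpha}(\Theta_{r(a)}) \le 2 M_{\alpha}$, which makes explicit the step the paper leaves implicit, namely that the sum tracks the larger of the two terms.
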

\begin{proof}
Applying L'H\^{o}pital's rule, we get
\begin{align*}
\underset{\lambda \to \infty} {\operatorname{\lim}} \log_{\lambda} \left ( \frac{\pi_{\Theta_{l(a)}}}{\pi_{\Theta_a}} \cD_{\alpha}(\Theta_{l(a)}) + \frac{\pi_{\Theta_{r(a)}}}{\pi_{\Theta_a}} \cD_{\alpha}(\Theta_{r(a)}) \right ) = \max \{ \log_2 N_{l(a)}, \log_2 N_{r(a)} \}
\end{align*}
Since $\log_2$ is a monotonic increasing function, the optimization problem, $\min \max \{ \log_2 N_{l(a)}, \log_2 N_{r(a)} \}$ is equivalent to the optimization problem, $\min \max \{ N_{l(a)} , N_{r(a)} \}$. 
\end{proof}

Note that the cost function minimized at each internal node of a tree in $\lambda$-GGBS is $C_a := \frac{\pi_{\Theta_{l(a)}}}{\pi_{\Theta_a}} \cD_{\alpha}(\Theta_{l(a)}) + \frac{\pi_{\Theta_{r(a)}}}{\pi_{\Theta_a}} \cD_{\alpha}(\Theta_{r(a)})$. Since $\log_{\lambda}$ is a monotonic function, this is equivalent to minimizing the function $\log_{\lambda} (C_a)$. We know from Corollary \ref{cor:GIsc2} that in the limiting case when $\lambda$ tends to infinity, this reduces to minimizing $\max \{ N_{l(a)} , N_{r(a)} \}$ at each internal node in the tree.

% -------------------------------------------------------------------------------------------------------
% ------------------------------ Experiments -----------------------------

\begin{figure}[!t]
  \centering
  \includegraphics[scale = 0.42]{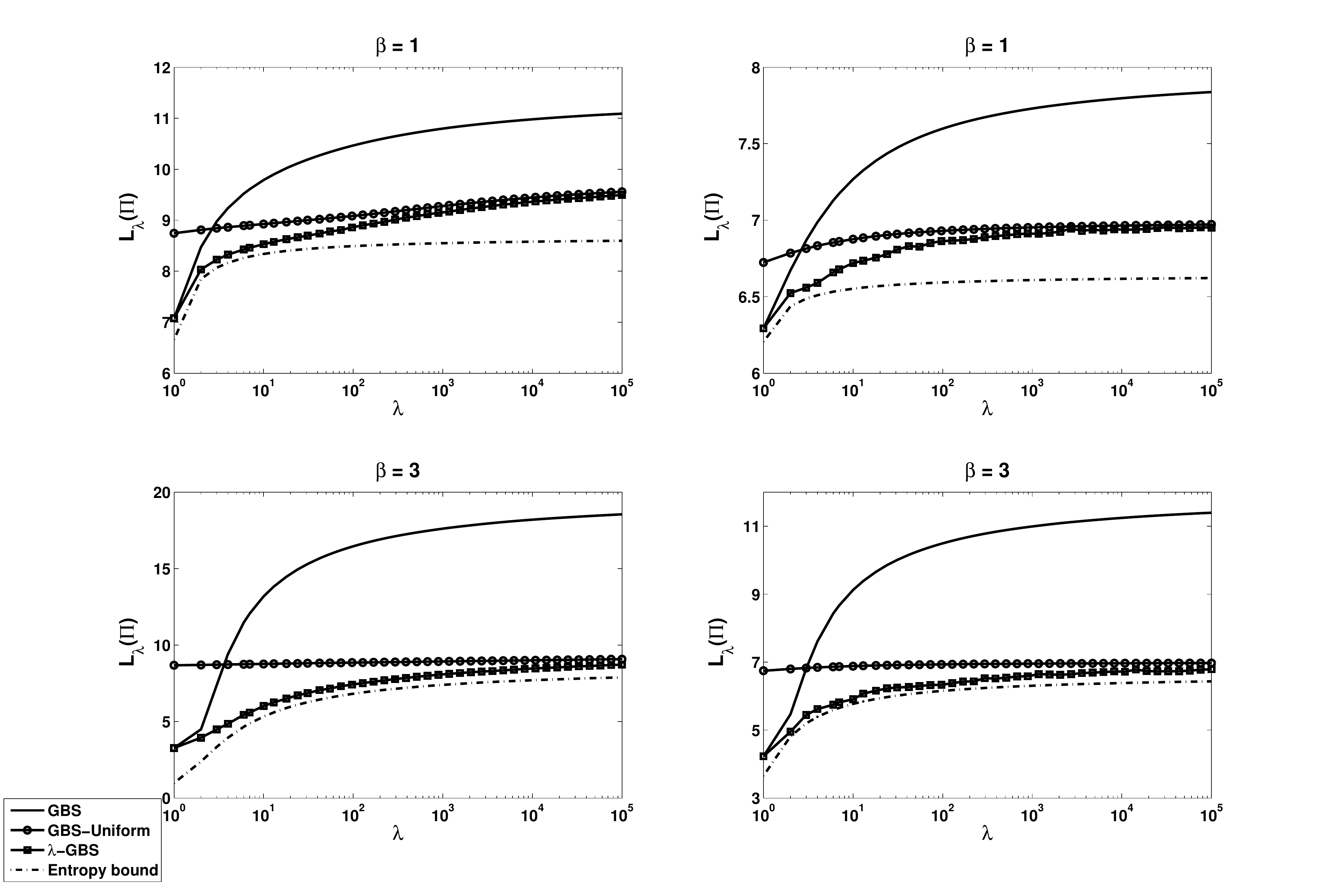}
  \caption{\small \sl Experiments to demonstrate the improved performance of $\lambda$-GBS over GBS and GBS with uniform prior. The plots in the first column correspond to the WISER database and those in the second column correspond to synthetic data.}
 \label{fig:exp cost}
\end{figure}

\section{Experiments}
We compare the proposed algorithms with GBS on both synthetic data and a real dataset known as WISER, which is a toxic chemical database describing the binary relationship between $298$ toxic chemicals and $79$ acute symptoms. We only present results for object (as opposed to group) identification. Figure \ref{fig:exp cost} demonstrates the improved performance of $\lambda$-GBS over standard GBS, and GBS with uniform prior, over a range of $\lambda$ values. Each curve corresponds to the average value of the cost function $L_{\lambda}(\cP)$ as a function of $\lambda$ over $100$ repetitions. 

The plots in the first column correspond to the WISER database, which has been studied in more detail in \cite{bellala}.
Here, in each repetition, the prior is generated according to Zipf's law, i.e., $(k^{-\beta}/\sum_{i=1}^M i^{-\beta})_{k=1}^M$, $\beta \geq 0$, after randomly permuting the objects. Note that in the special case, when $\beta = 0$, this reduces to the uniform distribution and as $\beta$ increases, it tends to a skewed distribution with most of the probability mass concentrated on a single object. 

The plots in the second column correspond to synthetic data based on an active learning application. We consider a two-dimensional setting where the classifiers are restricted to be linear classifiers of the form $sign(x_i - c)$, $sign(c - x_i)$, where $i=1,2$ and $c$ takes on $25$ distinct values. The number of distinct classifiers is therefore $100$, and the number of queries is $26^2 = 676$. The goal is to identify the classifier by selecting queries judiciously. Here, the prior is generated such that the classifiers that are close to $x_i = 0$ are more likely than the ones away from the axes, with their relative probability decreasing according to Zipf's law $k^{-\beta}$, $\beta \geq 0$. Hence, the prior is the same in each repetition. However, the randomness in each repetition comes from the greedy algorithms due to the presence of multiple best splits at each internal node. Note that in all the experiments, $\lambda$-GBS performs better than GBS and GBS with uniform prior. We also see that $\lambda$-GBS converges to GBS as $\lambda \to 1$ and to GBS with uniform prior as $\lambda \to \infty$.

% -----------------------------------------------------------------------------------------------------
% ----------------------------- Conclusions --------------------------------

\section{Conclusions}
In this paper, we show that generalized binary search (GBS) is a greedy algorithm to optimize the expected number of queries needed to identify an object. We develop two extensions of GBS, motivated by the problem of toxic chemical identification. First, we derive a greedy algorithm, $\lambda$-GBS, to minimize the expected exponentially weighted query cost. The average and worst cases fall out in the limits as $\lambda \to 1$ and $\lambda \to \infty$, and correspond to GBS and GBS with uniform prior, respectively. Second, we suppose the objects are partitioned into groups, and the goal is to identify only the group of the unknown object. Once again, we propose a greedy algorithm, $\lambda$-GGBS, to minimize the expected exponentially weighted query cost. The algorithms are derived in a common framework. In particular, we prove exact formulas for the exponentially weighted query cost that close the gap between previously known lower bounds related to R\'{e}nyi entropy. These exact formulas are then optimized in a greedy, top-down manner to construct a decision tree. An interesting open question is to relate these greedy algorithms to the global optimizer of the exponentially weighted cost function. \\ 

% ------------------------------ Acknowledgments --------------------------------
\noindent 
\textbf{Acknowledgments:}
G. Bellala and C. Scott were supported in part by NSF Awards No. 0830490 and 0953135. S. Bhavnani was supported in part by NIH grant No. UL1RR024986. The authors would like to thank B. Mashayekhi and P. Wexler from NLM for providing access to the WISER database.

% ------------------------------ References ----------------------------------
%\newpage
%\nocite{*}
\bibliographystyle{IEEEtran}
\bibliography{ref}

% Generated by IEEEtran.bst, version: 1.12 (2007/01/11)
\begin{thebibliography}{10}
\providecommand{\url}[1]{#1}
\csname url@samestyle\endcsname
\providecommand{\newblock}{\relax}
\providecommand{\bibinfo}[2]{#2}
\providecommand{\BIBentrySTDinterwordspacing}{\spaceskip=0pt\relax}
\providecommand{\BIBentryALTinterwordstretchfactor}{4}
\providecommand{\BIBentryALTinterwordspacing}{\spaceskip=\fontdimen2\font plus
\BIBentryALTinterwordstretchfactor\fontdimen3\font minus
  \fontdimen4\font\relax}
\providecommand{\BIBforeignlanguage}[2]{{%
\expandafter\ifx\csname l@#1\endcsname\relax
\typeout{** WARNING: IEEEtran.bst: No hyphenation pattern has been}%
\typeout{** loaded for the language `#1'. Using the pattern for}%
\typeout{** the default language instead.}%
\else
\language=\csname l@#1\endcsname
\fi
#2}}
\providecommand{\BIBdecl}{\relax}
\BIBdecl

\bibitem{koren}
I.~Koren and Z.~Kohavi, ``Diagnosis of intermittent faults in combinational
  networks,'' \emph{IEEE Transactions on Computers}, vol. C-26, pp. 1154--1158,
  1977.

\bibitem{unluyurt}
\"{U}nl\"{u}yurt, ``Sequential testing of complex systems: A review,''
  \emph{Discrete Applied Mathematics}, vol. 142, no. 1-3, pp. 189--205, 2004.

\bibitem{shiozaki}
J.~Shiozaki, H.~Matsuyama, E.~O'Shima, and M.~Iri, ``An improved algorithm for
  diagnosis of system failures in the chemical process,'' \emph{Computational
  Chemical Engineering}, vol.~9, no.~3, pp. 285--293, 1985.

\bibitem{loveland}
D.~W. Loveland, ``Performance bounds for binary testing with arbitrary
  weights,'' \emph{Acta Informatica}, 1985.

\bibitem{pattipati}
F.~Yu, F.~Tu, H.~Tu, and K.~Pattipati, ``Multiple disease (fault) diagnosis
  with applications to the QMR-DT problem,'' \emph{Proceedings of IEEE
  International Conference on Systems, Man and Cybernetics}, vol.~2, pp.
  1187--1192, October 2003.

\bibitem{geman}
D.~Geman and B.~Jedynak, ``An active testing model for tracking roads in
  satellite images,'' \emph{IEEE Transactions on Pattern Analysis and Machine
  Intelligence}, vol.~18, no.~1, pp. 1--14, 1996.

\bibitem{dasgupta}
S.~Dasgupta, ``Analysis of a greedy active learning strategy,'' \emph{Advances
  in Neural Information Processing Systems}, 2004.

\bibitem{nowak}
R.~Nowak, ``Generalized binary search,'' \emph{Proceedings of the Allerton
  Conference}, 2008.

\bibitem{angluin}
D.~Angluin, ``Queries revisited,'' \emph{Theoretical Computer Science}, vol.
  313, pp. 175--194, 2004.

\bibitem{szczur}
M.~Szczur and B.~Mashayekhi, ``WISER Wireless information system for emergency
  responders,'' \emph{Proceedings of American Medical Informatics Association
  Annual Symposium}, 2005.

\bibitem{goodman}
R.~M. Goodman and P.~Smyth, ``Decision tree design from a communication theory
  standpoint,'' \emph{IEEE Transactions on Information Theory}, vol.~34, no.~5,
  1988.

\bibitem{shannon}
C.~E. Shannon, ``A mathematical theory of communication,'' \emph{Bell Systems
  Technical Journal}, vol.~27, pp. 379 -- 423, July 1948.

\bibitem{fano}
R.~M. Fano, \emph{Transmission of Information}.\hskip 1em plus 0.5em minus
  0.4em\relax MIT Press, 1961.

\bibitem{huffman}
D.~A. Huffman, ``A method for the construction of minimum-redundancy codes,''
  \emph{Proceedings of the Institute of Radio Engineers}, 1952.

\bibitem{cover}
T.~M. Cover and J.~A. Thomas, \emph{Elements of Information Theory}.\hskip 1em
  plus 0.5em minus 0.4em\relax John Wiley, 1991.

\bibitem{garey1}
M.~Garey, ``Optimal binary decision trees for diagnostic identification
  problems,'' Ph.D. dissertation, University of Wisconsin, Madison, 1970.

\bibitem{garey2}
------, ``Optimal binary identification procedures,'' \emph{SIAM Journal on
  Applied Mathematics}, vol. 23(2), pp. 173--186, 1972.

\bibitem{rivest}
L.~Hyafil and R.~Rivest, ``Constructing optimal binary decision trees is
  NP-complete,'' \emph{Information Processing Letters}, vol. 5(1), pp. 15--17,
  1976.

\bibitem{kosaraju}
S.~R. Kosaraju, T.~M. Przytycka, and R.~S. Borgstrom, ``On an optimal split
  tree problem,'' \emph{Proceedings of 6th International Workshop on Algorithms
  and Data Structures, WADS}, pp. 11--14, 1999.

\bibitem{roy}
S.~Roy, H.~Wang, G.~Das, U.~Nambiar, and M.~Mohania, ``Minimum-effort driven
  dynamic faceted search in structured databases,'' \emph{Proceedings of the
  17th ACM Conference on Information and Knowledge Management}, pp. 13--22,
  2008.

\bibitem{campbell1}
L.~L. Campbell, ``A coding problem and R\'{e}nyi's entropy,'' \emph{Information
  and Control}, vol.~8, no.~4, pp. 423--429, August 1965.

\bibitem{baer1}
M.~B. Baer, ``R\'{e}nyi to R\'{e}nyi - source coding under seige,''
  \emph{Proceedings of IEEE International Symposium on Information Theory}, pp.
  1258--1262, July 2006.

\bibitem{schulz}
F.~Schulz, ``Trees with exponentially growing costs,'' \emph{Information and
  Computation}, vol. 206, 2008.

\bibitem{campbell2}
L.~L. Campbell, ``Definition of entropy by means of a coding problem,''
  \emph{Z.Wahrscheinlichkeitstheorie und verwandte Gebiete}, vol.~6, pp.
  113--118, 1966.

\bibitem{hu}
T.~C. Hu, D.~J. Kleitman, and J.~T. Tamaki, ``Binary trees optimal under
  various criteria,'' \emph{SIAM Journal on Applied Mathematics}, vol.~37,
  no.~2, pp. 246--256, October 1979.

\bibitem{parker}
D.~S. Parker, ``Conditions for the optimality of the Huffman algorithm,''
  \emph{SIAM Journal on Computing}, vol.~9, no.~3, pp. 470--489, August 1980.

\bibitem{humblet}
P.~A. Humblet, ``Generalization of Huffman coding to minimize the probability
  of buffer overflow,'' \emph{IEEE Transactions on Information Theory}, vol.
  IT-27, no.~2, pp. 230--232, March 1981.

\bibitem{quinlan}
J.~R. Quinlan, \emph{C4.5: Programs for Machine Learning}.\hskip 1em plus 0.5em
  minus 0.4em\relax Morgan Kaufmann Publishers, 1993.

\bibitem{bellala}
G.~Bellala, S.~Bhavnani, and C.~Scott, ``Group-based query learning for rapid
  diagnosis in time-critical situations,'' University of Michigan, Ann Arbor,
  Tech. Rep., 2009.

\end{thebibliography}

% ---------------------------Appendix - Proof of Theorem ------------------------------

%\newpage
\appendix

\section{Proof of Theorem~\ref{thm:GI}}
Define two new functions $\Lt_{\lambda}$ and $\Ht_{\alpha}$ as
\begin{align*}
\Lt_{\lambda} & := \frac{1}{\lambda - 1} \left [ \sum_{j \in \cL} \pi_{\Theta_j} \lambda^{d_j} - 1 \right ] = \sum_{j \in \cL} \pi_{\Theta_j} \left [ \sum_{k=0}^{d_j-1} \lambda^k \right ] \\
\Ht_{\alpha} & := 1- \frac{1}{ \left (\sum_{i=1}^m \pi_{\Theta^i}^{\alpha} \right )^{\frac{1}{\alpha}}}
\end{align*}
Noting that the cost function $L_{\lambda}(\cP)$ can be written as,
\begin{align*}
L_{\lambda}(\cP) & = \log_{\lambda} \left ( \sum_{j \in \cL} \pi_{\Theta_j} \lambda^{d_j} \right ),
\end{align*}
the new function $\Lt_{\lambda}$ can be related to the cost function $L_{\lambda}(\Pi)$ as
\begin{align}
\label{eq:relation L Lt}
\lambda^{L_{\lambda}(\Pi)} = (\lambda - 1) \Lt_{\lambda} + 1 
\end{align}
Similarly, $\Ht_{\alpha}$ is related to the $\alpha$-R\'{e}nyi entropy $H_{\alpha}(\cP_{\vy})$ as
\begin{subequations}
\begin{align}
H_{\alpha}(\cP_{\vy}) & = \frac{1}{1 - \alpha} \log_2 \sum_{i=1}^m \pi_{\Theta^i}^{\alpha} = \frac{1}{\alpha \log_2 \lambda} \log_2 \sum_{i=1}^m \pi_{\Theta^i}^{\alpha} = \log_{\lambda} \left ( \sum_{i=1}^m \pi_{\Theta^i}^{\alpha} \right )^{\frac{1}{\alpha}} \label{relation_H_Ht_first} \\
\Longrightarrow \lambda^{H_{\alpha}(\cP_{\vy})} & = \left ( \sum_{i=1}^m \pi_{\Theta^i}^{\alpha} \right )^{\frac{1}{\alpha}} = \left ( \sum_{i=1}^m \pi_{\Theta^i}^{\alpha} \right )^{\frac{1}{\alpha}} \Ht_{\alpha} + 1 \label{relation_H_Ht_second}
\end{align}
\end{subequations}
where we use the definition of $\alpha$, i.e., $\alpha = \frac{1}{1 + \log_2 \lambda}$ in (\ref{relation_H_Ht_first}).

Now, we note from Lemma \ref{lem:Llambda decomposition} that $\Lt_{\lambda}$ can be decomposed as
\begin{align}
\label{eq:Llambda decomposition}
\Lt_{\lambda} & = \sum_{a \in \cI} \lambda^{d_a} \pi_{\Theta_a} \nonumber \\
\Longrightarrow \lambda^{L_{\lambda}(\Pi)} & = 1 + \sum_{a \in \cI} (\lambda - 1) \lambda^{d_a} \pi_{\Theta_a}
\end{align}
where $d_a$ denotes the depth of internal node `$a$' in the tree $T$. Similarly, note from Lemma \ref{lem:Halpha decomposition} that $\Ht_{\alpha}$ can be decomposed as
\begin{align}
\label{eq:Halpha decomposition}
\Ht_{\alpha} & =  \frac{1}{\left ( \sum_{i=1}^m \pi_{\Theta^i}^{\alpha} \right )^{\frac{1}{\alpha}}} \sum_{a \in \cI} \left [ \pi_{\Theta_a} \cD_{\alpha}(\Theta_a) -  \pi_{\Theta_{l(a)}} \cD_{\alpha}(\Theta_{l(a)}) -  \pi_{\Theta_{r(a)}} \cD_{\alpha}(\Theta_{r(a)}) \right ] \nonumber \\
\Longrightarrow \lambda^{H_{\alpha}(\cP_{\vy})} & = 1 + \sum_{a \in \cI} \left [ \pi_{\Theta_a} \cD_{\alpha}(\Theta_a) -  \pi_{\Theta_{l(a)}} \cD_{\alpha}(\Theta_{l(a)}) - \pi_{\Theta_{r(a)}} \cD_{\alpha}(\Theta_{r(a)}) \right ].
\end{align}
Finally, the result follows from (\ref{eq:Llambda decomposition}) and (\ref{eq:Halpha decomposition}) above. 
 
% ---------------------------- Lemma 1 -----------------------------------
 
\begin{lemma}
\label{lem:Llambda decomposition}
The function $\Lt_{\lambda}$ can be decomposed over the internal nodes in a tree $T$, as
\begin{align*}
\Lt_{\lambda} & = \sum_{a \in \cI} \lambda^{d_a} \pi_{\Theta_a} 
\end{align*}
where $d_a$ denotes the depth of internal node $a \in \cI$ and $\pi_{\Theta_a}$ is the probability mass of the objects at that node.
\end{lemma}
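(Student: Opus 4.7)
The plan is to recognize this identity as a change in the order of summation over (leaf, ancestor) pairs. First I would take the second form of $\Lt_\lambda$ given in the definition, namely $\sum_{j \in \cL} \pi_{\Theta_j}\sum_{k=0}^{d_j-1}\lambda^k$. The inner sum runs $k$ over $\{0, 1, \ldots, d_j - 1\}$, which I claim are exactly the depths of the internal-node ancestors of leaf $j$: in any binary decision tree in $\cT(\B,\cP,\vy)$, the unique root-to-leaf path terminating at $j$ contains exactly one internal node at each depth $0, 1, \ldots, d_j - 1$ (the root is at depth $0$ and the parent of $j$ is at depth $d_j - 1$). So I would rewrite the inner sum as $\sum_{a \in \mathrm{anc}(j)} \lambda^{d_a}$, where $\mathrm{anc}(j) \subseteq \cI$ denotes the set of internal-node ancestors of leaf $j$.

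Next I would swap the order of summation. For a fixed internal node $a \in \cI$, the leaves $j \in \cL$ for which $a \in \mathrm{anc}(j)$ are exactly the leaves of the subtree rooted at $a$; since every object reaching $a$ eventually terminates at some leaf of that subtree, we have $\sum_{j \in \cL : a \in \mathrm{anc}(j)} \pi_{\Theta_j} = \pi_{\Theta_a}$ by definition of $\pi_{\Theta_a}$. Pulling $\lambda^{d_a}$ out of the inner sum then gives $\sum_{a \in \cI} \lambda^{d_a} \pi_{\Theta_a}$, which is the stated decomposition.

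For completeness I would briefly justify the first equality in the definition of $\Lt_\lambda$ itself, i.e., that $\frac{1}{\lambda-1}[\sum_{j \in \cL}\pi_{\Theta_j}\lambda^{d_j} - 1]$ equals $\sum_{j \in \cL}\pi_{\Theta_j}\sum_{k=0}^{d_j-1}\lambda^k$; this is just the geometric-series identity $\sum_{k=0}^{d_j-1}\lambda^k = (\lambda^{d_j}-1)/(\lambda-1)$ combined with $\sum_{j \in \cL}\pi_{\Theta_j} = 1$, with the $\lambda = 1$ case handled by taking a limit (or by reading both sides as $\sum_j \pi_{\Theta_j} d_j$ directly). I do not foresee any real obstacle in this argument: the only step that requires care is the bookkeeping between leaf indices and internal-node indices in the ancestor-based double-counting, and once the ancestor interpretation of the geometric sum is in place the identity falls out immediately.
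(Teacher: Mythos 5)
Your argument is correct, but it takes a genuinely different route from the paper. The paper proves the lemma by structural induction on subtrees: it defines a normalized quantity $\Lt_{\lambda}^a$ for the subtree rooted at each internal node $a$, verifies the identity at nodes whose children are both leaves, and propagates it upward through three cases (both children internal, one leaf, both leaves). You instead give a direct exchange-of-summation argument: the inner geometric sum $\sum_{k=0}^{d_j-1}\lambda^k$ is reinterpreted as $\sum_{a}\lambda^{d_a}$ over the internal-node ancestors $a$ of leaf $j$ (there being exactly one ancestor at each depth $0,\dots,d_j-1$), and swapping the order of summation uses the fact that the leaves descending from $a$ carry total mass $\pi_{\Theta_a}$. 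Both steps are sound, and every internal node is an ancestor of at least one leaf, so the swap does recover all of $\cI$. Your approach is arguably more transparent and shorter, and it makes visible \emph{why} the identity holds (each internal node is ``charged'' once by every object passing through it, weighted by $\lambda^{\mathrm{depth}}$). What the paper's induction buys is uniformity: the companion decomposition of $\Ht_{\alpha}$ in Lemma~\ref{lem:Halpha decomposition} is a telescoping sum along root-to-leaf paths that does not reduce to a clean ancestor-counting statement, so the paper uses the same inductive template for both lemmas. Your counting argument would not transfer to that second lemma, but for this one it is a perfectly valid and somewhat cleaner alternative.
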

\begin{proof}
Let $T_a$ denote a subtree from any internal node `$a$' in the tree $T$ and let $\cI_a, \cL_a$ denote the set of internal nodes and leaf nodes in the subtree $T_a$, respectively. Then, define $\Lt_{\lambda}^a$ in the subtree $T_a$ to be
\begin{eqnarray*}
& \Lt_{\lambda}^a = \sum_{j \in \cL_a} \frac{\pi_{\Theta_j}}{\pi_{\Theta_a}} \left [ \sum_{k=0}^{d_j^a-1} \lambda^k \right ] & \\
\end{eqnarray*}
where $d_j^a$ denotes the depth of leaf node $j \in \cL_a$ in the subtree $T_a$. 

Now, we show using induction that for any subtree $T_a$ in the tree $T$, the following relation holds
\begin{align}
\label{eq:Lt decomposition}
\pi_{\Theta_a}\Lt_{\lambda}^a & = \sum_{s \in \cI_a} \lambda^{d_s^a} \pi_{\Theta_s}
\end{align}
where $d_s^a$ denotes the depth of internal node $s \in \cI_a$ in the subtree $T_a$.

The relation holds trivially for any subtree $T_a$ rooted at an internal node $a \in \cI$ whose both child nodes terminate as leaf nodes, with both the left hand side and the right hand side of the expression equal to $\pi_{\Theta_a}$. Now, consider a subtree $T_a$ rooted at an internal node $a \in \cI$ whose left child (or right child) alone terminates as a leaf node. Assume that the above relation holds true for the subtree rooted at the right child of node `$a$'. Then,
\begin{align*}
\pi_{\Theta_a} \Lt_{\lambda}^a &= \sum_{j \in \cL_a} \pi_{\Theta_j} \left [ \sum_{k=0}^{d_j^a-1} \lambda^k \right ] \\
& = \sum_{\{j \in \cL_a : d_j^a = 1 \}} \pi_{\Theta_j} + \sum_{\{j \in \cL_a: d_j^a > 1 \}} \pi_{\Theta_j} \left [ \sum_{k=0}^{d_j^a-1} \lambda^k \right ] \\
& = \pi_{\Theta_{l(a)}} + \sum_{\{j \in \cL_a: d_j^a > 1 \}} \pi_{\Theta_j} \left [ 1+ \lambda \sum_{k=0}^{d_j^a-2} \lambda^k \right ] \\
& = \pi_{\Theta_a} + \lambda \sum_{j \in \cL_{r(a)}} \pi_{\Theta_j} \left [ \sum_{k=0}^{d_j^{r(a)}-1} \lambda^k \right ] \\
 & = \pi_{\Theta_a} + \lambda \sum_{s \in \cI_{r(a)}} \lambda^{d_s^{r(a)}} \pi_{\Theta_s}
\end{align*}
where the last step follows from the induction hypothesis. Finally, consider a subtree $T_a$ rooted at an internal node $a \in \cI$ whose neither child node terminates as a leaf node. Assume that the relation in (\ref{eq:Lt decomposition}) holds true for the subtrees rooted at its left and right child nodes. Then,
\begin{align*}
\pi_{\Theta_a} \Lt_{\lambda}^a & = \sum_{j \in \cL_a} \pi_{\Theta_j} \left [ \sum_{k=0}^{d_j^a-1} \lambda^k \right ]  \\
& = \sum_{j \in \cL_{l(a)}} \pi_{\Theta_j} \left [ 1 + \lambda \sum_{k=0}^{d_j^a-2} \lambda^k \right ] + \sum_{j \in \cL_{r(a)}} \pi_{\Theta_j} \left [ 1 + \lambda \sum_{k=0}^{d_j^a-2} \lambda^k \right ] \\
& = \pi_{\Theta_a} + \lambda \sum_{j \in \cL_{l(a)}} \pi_{\Theta_j} \left [ \sum_{k=0}^{d_j^{l(a)}-1} \lambda^k \right ]  + \lambda \sum_{j \in \cL_{r(a)}} \pi_{\Theta_j} \left [ \sum_{k=0}^{d_j^{r(a)}-1} \lambda^k \right ] \\
& = \pi_{\Theta_a} + \lambda \left [ \sum_{s \in \cI_{l(a)}} \lambda^{d_s^{l(a)}} \pi_{\Theta_s} + \sum_{s \in \cI_{r(a)}} \lambda^{d_s^{r(a)}} \pi_{\Theta_s} \right ] = \sum_{s \in \cI_a} \lambda^{d_s^a} \pi_{\Theta_s}
\end{align*}
thereby completing the induction. Finally, the result follows by applying the relation in (\ref{eq:Lt decomposition}) to the tree $T$ whose probability mass at the root node, $\pi_{\Theta_a} = 1$.
\end{proof}

% ------------------------------------ Lemma 2 ------------------------------

\begin{lemma}
\label{lem:Halpha decomposition}
The function $\Ht_{\alpha}$ can be decomposed over the internal nodes in a tree $T$, as
\begin{align*}
\Ht_{\alpha} & = \frac{1}{\left ( \sum_{i=1}^m \pi_{\Theta^i}^{\alpha} \right )^{\frac{1}{\alpha}}} \sum_{a \in \cI} \left [ \pi_{\Theta_a} \cD_{\alpha}(\Theta_a) -  \pi_{\Theta_{l(a)}} \cD_{\alpha}(\Theta_{l(a)}) -  \pi_{\Theta_{r(a)}} \cD_{\alpha}(\Theta_{r(a)}) \right ] 
\end{align*}
where $\cD_{\alpha}(\Theta_a) := \left [ \sum_{i=1}^m \left (\frac{\pi_{\Theta_a^i}}{\pi_{\Theta_a}} \right )^{\alpha} \right ]^{\frac{1}{\alpha}}$ and $\pi_{\Theta_a}$ denotes the probability mass of the objects at any internal node $a \in \cI$.
\end{lemma}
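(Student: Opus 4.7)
The plan is to exploit a telescoping structure: each summand on the right-hand side involves a node $a$ together with its two children $l(a), r(a)$, so when we sum over all internal nodes, adjacent levels cancel and only the root and the leaves survive. Specifically, I would first observe that for every non-root internal node $b$, the quantity $\pi_{\Theta_b}\cD_\alpha(\Theta_b)$ appears in the sum exactly twice with opposite signs: positively in the summand indexed by $b$ itself, and negatively in the summand indexed by the unique parent of $b$ (where $b$ appears as either $l(\cdot)$ or $r(\cdot)$). These contributions cancel. The root contributes only the positive term (it has no parent), and each leaf $j \in \cL$ contributes only the negative term $-\pi_{\Theta_j}\cD_\alpha(\Theta_j)$ from its parent's summand.

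Carrying out this bookkeeping reduces the sum to
\[
\sum_{a \in \cI} \bigl[\pi_{\Theta_a}\cD_\alpha(\Theta_a) - \pi_{\Theta_{l(a)}}\cD_\alpha(\Theta_{l(a)}) - \pi_{\Theta_{r(a)}}\cD_\alpha(\Theta_{r(a)})\bigr] \;=\; \pi_{\Theta_{\text{root}}}\cD_\alpha(\Theta_{\text{root}}) - \sum_{j \in \cL} \pi_{\Theta_j}\cD_\alpha(\Theta_j).
\]
Next I would evaluate the two boundary contributions. At the root, $\pi_{\Theta_{\text{root}}} = 1$ and $\Theta_{\text{root}}^i = \Theta^i$, so the factor $\pi_{\Theta_a}$ cancels inside $\cD_\alpha$ to yield $\pi_{\Theta_{\text{root}}}\cD_\alpha(\Theta_{\text{root}}) = \bigl(\sum_{i=1}^m \pi_{\Theta^i}^\alpha\bigr)^{1/\alpha}$. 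At a leaf $j$, all objects present lie in a single group, so $\pi_{\Theta_j^i}/\pi_{\Theta_j}$ equals $1$ for one $i$ and $0$ otherwise; hence $\cD_\alpha(\Theta_j) = 1$ and $\sum_{j \in \cL} \pi_{\Theta_j}\cD_\alpha(\Theta_j) = \sum_{j \in \cL}\pi_{\Theta_j} = 1$.

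Combining these, the telescoped sum equals $\bigl(\sum_i \pi_{\Theta^i}^\alpha\bigr)^{1/\alpha} - 1$. Dividing by $\bigl(\sum_i \pi_{\Theta^i}^\alpha\bigr)^{1/\alpha}$, as the lemma prescribes, produces $1 - \bigl(\sum_i \pi_{\Theta^i}^\alpha\bigr)^{-1/\alpha}$, which is exactly $\Ht_\alpha$ by definition. As a more formal alternative---mirroring the inductive proof of Lemma~\ref{lem:Llambda decomposition}---one can define a subtree analogue of $\Ht_\alpha$ and induct on subtrees, with the base case reducing to the observation that $\cD_\alpha = 1$ at any leaf. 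I expect the main obstacle to be purely notational: one must carefully track whether $l(a)$ and $r(a)$ are internal nodes or leaves so that each node in the tree is accounted for exactly once. The signed-occurrence telescoping sketched above handles both cases uniformly and is the cleanest path.
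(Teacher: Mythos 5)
Your proof is correct and is in substance the same as the paper's: the paper establishes the identity by induction on subtrees (the recursive form of your telescoping), and both arguments rest on exactly the same two boundary evaluations, namely $\pi_{\Theta_{\mathrm{root}}}\cD_{\alpha}(\Theta_{\mathrm{root}}) = \left( \sum_{i=1}^m \pi_{\Theta^i}^{\alpha} \right)^{1/\alpha}$ at the root and $\cD_{\alpha}(\Theta_j) = 1$ at each leaf because leaves are pure in group. The inductive variant you mention in passing is precisely the paper's proof.
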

\begin{proof}
Let $T_a$ denote a subtree from any internal node `$a$' in the tree $T$ and let $\cI_a$ denote the set of internal nodes in the subtree $T_a$. Then, define $\Ht_{\alpha}^a$ in a subtree $T_a$ to be
\begin{eqnarray*}
& \Ht_{\alpha}^a = 1 - \frac{\pi_{\Theta_a}}{\left [ \sum_{i=1}^m \pi_{\Theta_a^i}^{\alpha} \right ]^{\frac{1}{\alpha}}}  &
\end{eqnarray*}

Now, we show using induction that for any subtree $T_a$ in the tree $T$, the following relation holds
\begin{align}
\label{eq:Ht decomposition}
\left [ \sum_{i=1}^m \pi_{\Theta_a^i}^{\alpha} \right ]^{\frac{1}{\alpha}} \Ht_{\alpha}^a & = \sum_{s \in \cI_a} \left [ \pi_{\Theta_s} \cD_{\alpha}(\Theta_s) -  \pi_{\Theta_{l(s)}} \cD_{\alpha}(\Theta_{l(s)}) -  \pi_{\Theta_{r(s)}} \cD_{\alpha}(\Theta_{r(s)}) \right ]
\end{align}

Note that the relation holds trivially for any subtree $T_a$ rooted at an internal node $a \in \cI$ whose both child nodes terminate as leaf nodes. Now, consider a subtree $T_a$ rooted at any other internal node $a \in \cI$. Assume the above relation holds true for the subtrees rooted at its left and right child nodes. Then,
\begin{align*}
\left [ \sum_{i=1}^m \pi_{\Theta_a^i}^{\alpha} \right ]^{\frac{1}{\alpha}} \Ht_{\alpha}^a & = \left [ \sum_{i=1}^m \pi_{\Theta_a^i}^{\alpha} \right ]^{\frac{1}{\alpha}} - \pi_{\Theta_a} \\
& = \left [ \sum_{i=1}^m \pi_{\Theta_a^i}^{\alpha} \right ]^{\frac{1}{\alpha}} - \pi_{\Theta_{l(a)}} - \pi_{\Theta_{r(a)}} \\
& = \left [ \sum_{i=1}^m \pi_{\Theta_a^i}^{\alpha} \right ]^{\frac{1}{\alpha}} - \left [ \sum_{i=1}^m \pi_{\Theta_{l(a)}^i}^{\alpha} \right ]^{\frac{1}{\alpha}} - \left [ \sum_{i=1}^m \pi_{\Theta_{r(a)}^i}^{\alpha} \right ]^{\frac{1}{\alpha}} \\
& \ \ \ \ + \left ( \left [ \sum_{i=1}^m \pi_{\Theta_{l(a)}^i}^{\alpha} \right ]^{\frac{1}{\alpha}} - \pi_{\Theta_{l(a)}} \right ) + \left ( \left [ \sum_{i=1}^m \pi_{\Theta_{r(a)}^i}^{\alpha} \right ]^{\frac{1}{\alpha}} - \pi_{\Theta_{r(a)}} \right ) \\
& =  \left [ \pi_{\Theta_a} \cD_{\alpha}(\Theta_a) -  \pi_{\Theta_{l(a)}} \cD_{\alpha}(\Theta_{l(a)}) -  \pi_{\Theta_{r(a)}} \cD_{\alpha}(\Theta_{r(a)}) \right ] \\
& \ \ \ \ + \left [ \sum_{i=1}^m \pi_{\Theta_{l(a)}^i}^{\alpha} \right ]^{\frac{1}{\alpha}} \Ht_{\alpha}^{l(a)} + \left [ \sum_{i=1}^m \pi_{\Theta_{r(a)}^i}^{\alpha} \right ]^{\frac{1}{\alpha}} \Ht_{\alpha}^{r(a)} \\
& = \sum_{s \in \cI_a} \left [ \pi_{\Theta_s} \cD_{\alpha}(\Theta_s) -  \pi_{\Theta_{l(s)}} \cD_{\alpha}(\Theta_{l(s)}) -  \pi_{\Theta_{r(s)}} \cD_{\alpha}(\Theta_{r(s)}) \right ]
\end{align*}
where the last step follows from the induction hypothesis. Finally, the result follows by applying the relation in (\ref{eq:Ht decomposition}) to the tree $T$. 
\end{proof}

\end{document}